\PassOptionsToPackage{dvipsnames}{xcolor}
\RequirePackage[l2tabu, orthodox]{nag}
\documentclass[12pt]{article}
\usepackage{mathtools} 
\usepackage{booktabs} 
\usepackage{tikz} 



\usepackage{todonotes}
\usepackage[parfill]{parskip}
\usepackage[margin=1in]{geometry}
\usepackage[defaultlines=3,all]{nowidow}

\usepackage[T1]{fontenc}
\usepackage{microtype}
\usepackage[english]{babel}
\usepackage{libertine}             
\usepackage{titlesec}
\titleformat{\section}[hang]{\Large\bfseries}{\thesection}{1em}{}
\usepackage{amsmath}
\usepackage{amssymb}
\usepackage{amsthm}
\usepackage{mathtools}


\def\RR{{\mathbb R}}    
\def\11{{\mathbf 1}}    

\def\cA{{\mathcal A}}             \def\cO{{\mathcal O}}     \def\cP{{\mathcal P}}     \def\cQ{{\mathcal Q}}  \def\cF{{\mathcal F}}    \def\cX{{\mathcal X}}   

\def\co{{\text{co}}} 
\def\ext{{\text{ext}}}













\def\RR{{\mathbb R}}    
\def\11{{\mathbf 1}}    

\def\cA{{\mathcal A}}             \def\cO{{\mathcal O}}     \def\cP{{\mathcal P}}     \def\cQ{{\mathcal Q}}  \def\cF{{\mathcal F}}    \def\cX{{\mathcal X}}

\usepackage{adjustbox}

\usepackage{comment}             
\usepackage{wrapfig}             
\usepackage{caption}             
\captionsetup{skip=1.pt}
\usepackage{pifont}              
\usepackage{minitoc}             
\usepackage{algorithm}           
\usepackage[noend]{algpseudocode}
\usepackage{nicefrac}            
\usepackage{booktabs}            
\usepackage{tikz}
\usetikzlibrary{tikzmark}        
\usepackage{multirow}
\usepackage{stfloats}
\usepackage{bm}
\usepackage[inline]{enumitem}
\usepackage{multirow}
\usepackage[normalem]{ulem}
\usepackage[toc,page,header]{appendix}
\usepackage{minitoc}
\usepackage{centernot}
\usepackage{enumitem}
\setlength{\parindent}{0pt}
\usepackage{fdsymbol}
\usepackage{hyperref}
\usepackage{cleveref}
\usepackage{subcaption}
\usepackage[dvipsnames]{xcolor}

\usepackage[sort]{natbib}
\usepackage{bibunits}

\hypersetup{colorlinks,linktoc=all}
\usepackage[all]{hypcap}
\hypersetup{citecolor=Violet}
\hypersetup{linkcolor=black}
\hypersetup{urlcolor=MidnightBlue}


\theoremstyle{plain}
\newtheorem{theorem}{Theorem}[section]
\newtheorem{proposition}[theorem]{Proposition}
\newtheorem{lemma}[theorem]{Lemma}

\theoremstyle{definition}
\newtheorem{definition}[theorem]{Definition}

\theoremstyle{remark}
\newtheorem{remark}[theorem]{Remark}


\title{Truthful Elicitation of Imprecise Forecasts}

%
%

\author{
  Anurag Singh, Siu Lun Chau, and Krikamol Muandet\\
     CISPA Helmholtz Center for Information Security, Saarbr\"ucken, 
    Germany   \\
  anurag.singh@cispa.de, siu-lun.chau@cispa.de, muandet@cispa.de\\
  }

\date{February 11, 2025}

\begin{document}
\maketitle

\begin{bibunit}[alp]

\begin{abstract}
    The quality of probabilistic forecasts is crucial for decision-making under uncertainty. While proper scoring rules incentivize truthful reporting of precise forecasts, they fall short when forecasters face epistemic uncertainty about their beliefs, limiting their use in safety-critical domains where decision-makers~(DMs) prioritize proper uncertainty management. To address this, we propose a framework for scoring \emph{imprecise forecasts}---forecasts given as a set of beliefs. Despite existing impossibility results for deterministic scoring rules, we enable truthful elicitation by drawing connection to social choice theory and introducing a two-way communication framework where DMs first share their aggregation rules (e.g., averaging or min-max) used in downstream decisions for resolving forecast ambiguity. This, in turn, helps forecasters resolve indecision during elicitation. We further show that truthful elicitation of imprecise forecasts is achievable using proper scoring rules randomized over the aggregation procedure. Our approach allows DM to elicit and integrate the forecaster's epistemic uncertainty into their decision-making process, thus improving credibility.
\end{abstract}

Keywords: Imprecise Probability, Scoring Rules, Elicitation

\doparttoc 
\faketableofcontents 

\section{Introduction}
\begin{figure*}
    \centering
    \includegraphics[width=\linewidth]{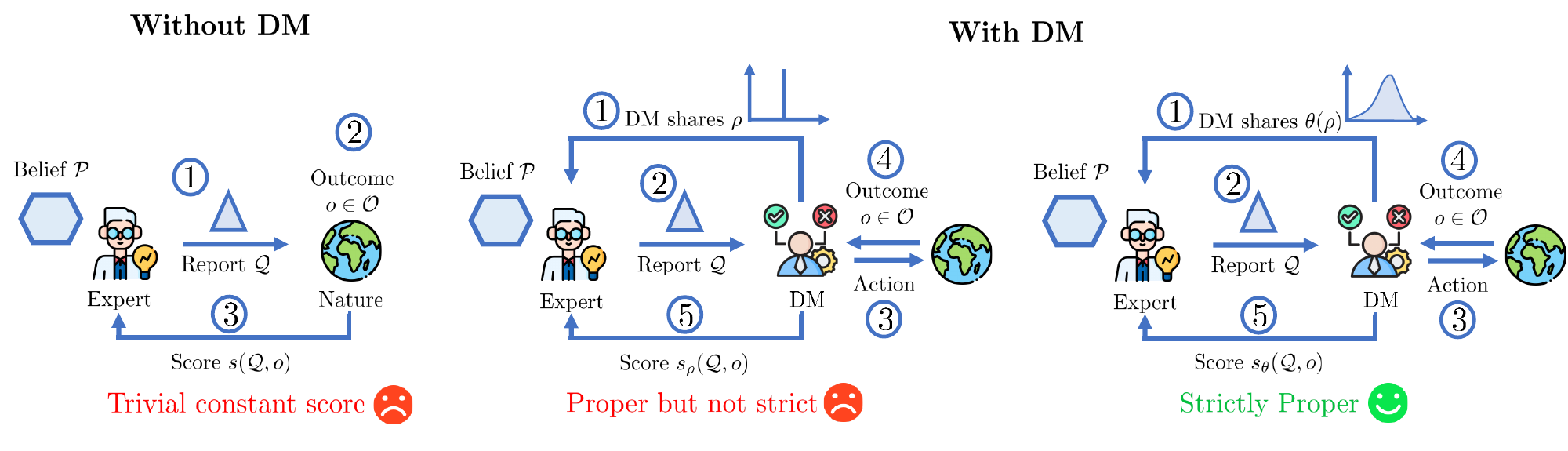}
    \caption{
    We consider scenarios where the expert holds an imprecise belief over the outcome $o \in \cO$, represented as $\cP \subseteq \Delta(\cO)$. The goal is to truthfully elicit this belief, i.e., the best report $Q$ should be $\cP$. The leftmost figure directly extends precise scoring rules to the imprecise case, ignoring the downstream DM. Truthful elicitation in the imprecise setting requires the DM to share their aggregation rule $\rho$ with the expert (middle). To avoid DM's strategic manipulation by the forecaster, DM shares a distribution $\theta(\rho)$ over aggregation rules (right), resulting in a strictly proper scoring rule $s_\theta$.}
    \label{fig:main-fig}
\end{figure*}
Probabilistic forecasting is a powerful tool for decision-making under uncertainty with diverse applications ranging from energy demand forecasting~\citep{pinson2012evaluating,pinson2013wind} and credit risk assessment~\citep{rindt2022survival,yanagisawa2023proper} to machine learning (ML)~\citep{singh2023robust} and large language models (LLMs)~\citep{shao2024language,wu2024elicitationgpt}.
Proper scoring rules serve as fundamental tools for evaluating the quality of probabilistic forecasts~\citep{brier1950verification,murphy1988decomposition,gneiting2007strictly}. They also serve as a backbone for eliciting other distributional properties such as their moments~\citep{frongillo2014general}. By assigning numerical scores based on the reported forecast and the realized outcome, these rules incentivize truthful reporting, i.e., any deviation from the forecaster’s true beliefs would result in suboptimal scores. Beyond applications in statistics, proper scoring rules have a deep connection with mechanism design, a sub-field of economics. When used as a payment mechanism, the agents have no incentive to lie, a property known as incentive compatibility~\citep{myerson1981optimal}. 

Traditionally, scoring rules operate under the assumption that forecasters possess a \emph{precise} probabilistic belief about some uncertain event. They are designed to reward the forecasters whose forecasts reflect their true precise beliefs~\citep{savage1971elicitation, gneiting_probabilistic_2014}. 
For example, in weather forecasting~\citep{brier1950verification} a forecaster who believes there is a 60\% chance of rain tomorrow should ideally report 60\% as their forecast.
However, in many real-world scenarios, forecasters face significant ambiguity due to the inherent complexity of atmospheric systems, coupled with limited data and model resolution, which introduce substantial imprecision \citep{wilks2011statistical}. It is thus plausible for forecasters to report imprecise probability assessments in these scenarios; for example, the chance of rain tomorrow may be assessed within the interval $[50\%, 70\%]$. Importantly, classical proper scoring rules built for precise forecasts cannot account for such additional uncertainty~\citep{konek2015}. 

Under the context of machine learning, imprecise forecasting is closely related to the concept of out-of-distribution (OOD) generalization~\citep{muandet_domain_2013,Zhou21:DG-Review}.
In standard supervised learning, where training and test data are assumed to be independent and identically distributed (i.i.d.), the predictive model reflects the learner's precise belief about the data generating process. However, in OOD generalization---where multiple training datasets are observed, and the test data may not be i.i.d. with the training data---\citet{singh2024domain} argue that the notion of generalization (e.g., average-case or worst-case optimization strategy) should be determined by the model's end user, also referred to as the decision-maker (DM). When direct interaction between the learner and the DM is not possible, \citet{singh2024domain} propose an \emph{imprecise learning} algorithm that trains a portfolio of predictors (forecasts) in advance, which are then provided to the DM. In contrast, for practical scenarios where the learner and DM can communicate, eliciting precise forecasts is straightforward using classical scoring rules. However, eliciting imprecise forecasts remains challenging due to the lack of suitable imprecise scoring rules. This gap motivates us to design appropriate imprecise scoring rules that are applicable beyond machine learning contexts. 

The key challenge to designing an appropriate scoring rule arises from the forecaster’s epistemic uncertainty. This challenge has led to several impossibility theorems for strictly proper imprecise scoring rules~\citep{seidenfeld2012forecasting,mayo2015accuracy,schoenfield2017accuracy}. However, these works focus solely on eliciting imprecise forecasts from the forecaster, overlooking the fact that probabilistic forecasts are typically used for downstream decision-making, making elicitation rarely the sole objective. Without input from the DM during elicitation, forecaster must rely solely on their imprecise belief, which contains inherent ambiguity. This often leads to indecision during elicitation---a key factor behind prior impossibility results. Recently, \citet{frohlich2024scoring} explored imprecise scoring rules involving DMs, but their analysis focused only on min-max (pessimistic) decision-making and lacked formal discussion of the DM's role. More broadly, indecision can be resolved through subjective choices beyond the min-max rule. However, it cannot be resolved by forecasters alone without eliminating their epistemic uncertainty. We argue that the DM must actively assist forecasters in navigating indecision by communicating their subjective preferences.

\textbf{Our contributions.} To address this challenge, we propose a novel setup for scoring imprecise forecasts where we consider a DM as an additional agent, who actively guides the forecaster in resolving indecision during elicitation (see \Cref{fig:main-fig} for different scenarios). Our contributions are summarized as follows:
\begin{itemize}
    \item We show that prior impossibility results stem from the lack of communication between DM and the forecaster. 
    \item We formalise DM-forecaster communication using aggregation rules from social choice theory~\citep{arrow2012social} and generalize tailored scoring rules
   ~\citep{johnstone2011tailored} to accommodate these aggregations. 
   \item We analyze the connection between axiomatic properties of aggregation rules from the social choice perspective and their impact on both truthful elicitation from the forecaster and the DM's decision-making process.
   
    \item By restricting to strategic communication, specifically by sharing only a distribution over aggregation rules, we propose a novel randomized tailored scoring rule that is strictly proper for imprecise forecasts. 
\end{itemize}
The rest of the paper is organized as follows. Section~\ref{sec:preliminaries} introduces proper scoring rules and imprecise probabilities. Section~\ref{section:setup} then formalizes the notion of an imprecise forecaster and outlines decision-making for the forecaster and DM. Next, Section~\ref{sec:imprecisescoringrules} explores imprecise scoring rules, first without communication and then with aggregation. Section~\ref{sec:rand-tailored-rules} presents strictly proper scoring rules for imprecise forecasts, while Section~\ref{sec:related-work} reviews prior work. Finally, Section~\ref{sec:discussion} concludes with a discussion of future directions.
\section{Preliminaries}
\label{sec:preliminaries}
This section introduces proper scoring rules, imprecise probabilities (IP), and credal sets. We begin by establishing the notation. Let $(\cO,\cF)$ be a measurable space where $\cO$ is a finite, discrete, non-empty set of possible outcomes (or states of nature) and $\cF$ is a corresponding sigma-algebra.  Let $O:\cO\rightarrow\mathbb{R}$ be a random variable associated with p.m.f. $p:\cO\rightarrow[0,1]$ on outcome $o\in\cO$. The probability simplex $\Delta(\cO)$ denotes the set of all probability distributions on $\cO$. Our framework involves two agents: a forecaster and a decision-maker~(DM), each with an associated utility function $u:\cX\times\cO\rightarrow\mathbb{R}$, where $\cX$ represents the decision space relevant to the agent's utility. 
Since we often refer to specific outcomes $o\in\mathcal{O}$, we will use $O$ and $o$ interchangeability. Thus, for some $x\in\mathcal{X}$, the agent's expected utility $\mathbb{E}_{O\sim p}[u(x,O)]$ is expressed as $\mathbb{E}_{o\sim p}[u(x,o)]$. For a set $\cP$, $co(\cP)$ corresponds to the convex hull and $\ext(\cP)$ to its extreme points.

\subsection{Precise Scoring Rules}
\label{section:proper-scoring-rule-preliminary}

Scoring rules incentivize a forecaster to truthfully report their probability assessments of an uncertain event~\citep{winkler1967quantification,brier1950verification}.  Specifically, a scoring rule $s:\Delta(\cO)\times \cO \rightarrow \mathbb{R}$ assigns a score of $s(q,o)$ to a forecaster with a forecast $q\in\Delta(\cO)$ when an outcome $o$ happens.
\begin{definition}
    A forecaster is precise if their true belief can be expressed as a probability distribution $p\in\Delta(\cO)$.
\end{definition}
Since classical proper scoring rules focus on truthful reporting and evaluation of \emph{precise} forecasts, we refer to them as precise scoring rules. To discourage a forecaster from making overly confident predictions, e.g., $q(o)=0$. We introduce $\textit{regular}$ precise scoring rule, i.e. $s(q,o)\in\mathbb{R}$ for all $o\in\cO$ and $s(q,o)=-\infty$ only if $q(o)=0$. 
\begin{definition}[Expected utility of the forecaster] Precise scoring rules implicitly assume that the forecaster is an expected utility-maximising agent. Therefore, for a forecaster with true belief $p$, the  utility of reporting forecast $q$ is
\begin{align}
\label{eq:expected-score-utility}
    u_{p}(q)=\mathbb{E}_{o\sim p}[s(q,o)].
\end{align}
\end{definition}
We now define a subclass of regular precise scoring rules, known as \emph{strictly} proper precise scoring rules that incentivize truthful reporting of the forecaster's belief.
\begin{definition}[Strictly Proper Precise Scoring Rule]
\label{def:strictlyproperscoringrule}
A scoring rule $s:\Delta(\cO)\times \cO\rightarrow \mathbb{R}\cup\{-\infty\}$ is strictly proper if the forecaster's true belief $p\in\Delta(\cO)$ uniquely maximizes their expected utility, i.e., for all $p,q\in\Delta(\cO)$ s.t. $q\neq p$,
\begin{align}
    \mathbb{E}_{o\sim p}[s(p,o)] > \mathbb{E}_{o\sim p}[s(q,o)].
\end{align}
\end{definition}
Some examples of strictly proper precise scoring rules are, logarithmic scoring rule $s(q,o)=a_o+b\text{ }\log(q(o))$ and quadratic scoring rule $s(q,o)=a_o+b(2q(o)-\mathbb{E}_{o\sim q}[q(o)])$ with $b\in\mathbb{R}_{+}$ and $a_o\in\mathbb{R}$ as arbitrary parameters. Proper precise scoring rules are closely related to convexity and can be characterized using convex functions as shown in ~\citet{mccarthy1956measures,savage1971elicitation,gneiting2007strictly}.

\begin{theorem}[\citealt{gneiting2007strictly}]
~\label{theorem:gneiting}
A regular scoring rule $s$ is (strictly) proper if and only if 
\begin{align}
    s(q,o) = G(q) - \sum_{o\in\cO} G'(q) dq(o) + G'(q)(o)
\end{align}
where $G:\Delta(\cO)\rightarrow \mathbb{R}$ is a (strictly) convex function and $G'(q)$ is a subgradient of $G$ at point $q$ and $G'(q)(o)$ is the value of gradient at outcome $o$.
\end{theorem}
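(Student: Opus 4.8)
The plan is to prove the two implications separately, exploiting in each case the elementary observation that, for a \emph{fixed} report $q$, the map $p\mapsto \mathbb{E}_{o\sim p}[s(q,o)]$ is affine (indeed linear) in $p$; hence the expert's expected utility $u_p(q)$ from Eq.~\eqref{eq:expected-score-utility} is, as $q$ ranges over $\Delta(\cO)$, a whole family of affine functions of the true belief $p$. Throughout I write $\langle G'(q),\mu\rangle := \int G'(q)(o)\,\mu(do)$ for a (signed) measure $\mu$, so that the claimed identity reads $s(q,o)=G(q)-\langle G'(q),q\rangle + G'(q)(o)$.

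For the ``if'' direction, suppose $s$ has the stated form with $G$ convex and $G'(q)$ a subgradient of $G$ at $q$. I would first compute, for true belief $p$ and report $q$,
\[
u_p(q)=\mathbb{E}_{o\sim p}[s(q,o)] = G(q) - \langle G'(q),q\rangle + \langle G'(q),p\rangle = G(q) + \langle G'(q), p-q\rangle ,
\]
so in particular $u_p(p)=G(p)$. The subgradient inequality $G(p)\ge G(q)+\langle G'(q),p-q\rangle$ then yields $u_p(p)\ge u_p(q)$, i.e.\ properness; when $G$ is strictly convex the inequality is strict for every $q\neq p$, giving strict properness.

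For the ``only if'' direction, given a (strictly) proper $s$, define the expected-score-at-optimum function
\[
G(p):=\sup_{q\in\Delta(\cO)} u_p(q) = u_p(p) = \mathbb{E}_{o\sim p}[s(p,o)],
\]
where the supremum is attained at $q=p$ precisely by (strict) properness. As a pointwise supremum of the affine maps $p\mapsto u_p(q)$, $G$ is convex. Moreover, for each fixed $q$ the affine map $p\mapsto u_p(q)=\langle s(q,\cdot),p\rangle + \big(u_q(q)-\langle s(q,\cdot),q\rangle\big)$ touches $G$ at $p=q$ and stays below it elsewhere, so it is a supporting affine functional; hence $G'(q):=s(q,\cdot)$ is a subgradient of $G$ at $q$. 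Plugging this choice into the claimed formula and using $\langle G'(q),q\rangle = \mathbb{E}_{o\sim q}[s(q,o)] = G(q)$ recovers $G(q)-G(q)+s(q,o)=s(q,o)$, as required. For the strict case, one checks that strict properness forces the supporting functionals at distinct points to be distinct, which is exactly strict convexity of $G$ (a supremum of affine functions with no ``flat'' overlap).

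\textbf{Main obstacle.} The convexity argument itself is essentially one line; the substantive work is measure-theoretic. Because $\cO$ is a continuous outcome space, $G'(q)$ lives in an infinite-dimensional space of functions/signed measures, and one must ensure the subgradient exists in the appropriate sense, that $\langle G'(q),p\rangle=\mathbb{E}_{o\sim p}[s(q,o)]$ is well defined and finite for the relevant $p$ (this is where \emph{regularity} of the scoring rule is used, to control the $-\infty$ values on null events), and that ``supporting affine functional'' is indeed the right notion of subgradient here. Making these integrability and functional-analytic technicalities rigorous---rather than the convexity core---is the delicate part; in the finite-outcome case the whole argument is elementary.
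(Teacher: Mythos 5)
Your argument is the standard Gneiting--Raftery characterization proof (which itself traces back to McCarthy and Savage), and it is correct in its essentials: for the ``if'' direction you collapse the expected score to $G(q)+\langle G'(q),p-q\rangle$ and invoke the subgradient inequality; for the ``only if'' direction you build $G$ as the (affine-in-$p$) pointwise supremum $G(p)=\sup_q u_p(q)=u_p(p)$, recognize $s(q,\cdot)$ as the supporting affine functional at $q$, and unwind the formula tautologically. Note, however, that the paper does not prove \Cref{theorem:gneiting}; it imports it by citation, and the only nearby proof in the appendix is for \Cref{remark:gneiting}, which \emph{assumes} \Cref{theorem:gneiting} and merely derives the identification $G(p)=\int s(p,o)\,dp(o)$ as the maximum expected score. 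Your ``only if'' construction of $G$ is exactly the object the remark identifies, so the two are fully compatible --- you are supplying the proof the paper omits rather than duplicating one it already contains.

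Two small points of rigor worth tightening. First, the claim that strict properness forces ``supporting functionals at distinct points to be distinct'' is not quite the right formulation: distinctness of the tangent functionals does not by itself give strict convexity (a single affine function has a unique supporting functional everywhere and is not strictly convex). The correct statement, which your inequality $G(p)>G(q)+\langle G'(q),p-q\rangle$ for $p\neq q$ already gives you, is that each supporting affine functional touches $G$ at \emph{exactly one} point; that is what rules out a flat segment on the graph and yields strict convexity. Second, you are right to flag the measure-theoretic issues: when $\cO$ is uncountable, $\sup_q u_p(q)$ need not be attained or finite a priori without regularity, and $G'(q)=s(q,\cdot)$ must be shown to be $p$-integrable for all relevant $p$. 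Gneiting and Raftery handle this by restricting to regular scoring rules and quasi-integrable functions, which is precisely the hypothesis in the statement; a complete write-up would need to make those integrability assumptions explicit at the point where you assert $G$ is real-valued and that $\langle G'(q),p\rangle$ is well defined.
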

An implication of Theorem~\ref{theorem:gneiting} is that with this characterisation of the scoring rule $s$, we can interpret $G$ as the corresponding maximum expected score ~\citep{frongillo2014general}.
The derivation of $G$ as the expected score is included in  Appendix \ref{remark:gneiting} for completeness. 

\subsection{IP and Credal Sets}
\label{subsection:Imprecise-probability-and-credal-sets}

Standard probability theory assigns a unique numerical value to each event, whereas \emph{imprecise probabilities} (IP) allows a range of plausible values to represent uncertainty in the presence of limited or ambiguous information. One common approach to modelling such uncertainty is via \emph{credal sets}. 
Given a subset $\cP\subseteq\Delta(\cO)$ of the plausible probability distributions, a credal set is defined as a closed and convex combination of $\cP$.
The assumption of convexity and closedness allows for rational decision-making~\citep{gajdos2004decision,troffaes2007decision} and satisfies axioms such as coherence~\citep{definetti1974theory,walley1991statistical}. While $\cP$ directly specifies the plausible beliefs about the state of nature, $\co(\cP)$ denotes the uncertainty inferred by a rational agent~\citep{walley1991statistical, augustin2014introduction}.

\section{A Joint Decision Framework for DM and Forecaster}
\label{section:setup}


In this work, we consider scenarios where an agent is tasked with selecting an input $x$ from a finite space of inputs $\cX := \{x_1,\ldots,x_n\}$. Agent's choice of input $x\in\cX$ and outcome of uncertain event $o\in\cO$ quantify the utility $u(x,o)$ obtained by the agent. In the case of a precise forecaster, $\cX:=\Delta(\cO)$ and Eq. \eqref{eq:expected-score-utility} shows how the precise score $u(x,o):=s(p,o)$ acts as a utility for the forecaster, underlining the decision-making aspect within elicitation. From the DM's perspective, $\cX:=\cA$ where $\cA:=\{a_1,\dots,a_m\}$ denotes the finite space of actions which DM can choose from. Depending upon the outcome $o\in\cO$, the DM obtains $u(x,o):=u(a,o)$ as the utility. 

\subsection{Decision-Making with Forecasts}
\label{sec:dmwithforecasts}
There exists a crucial difference between decision-making with imprecise forecasts v.s. precise forecasts. In the case of precise forecasts, the agent (forecaster or DM) has a precise belief or report $p\in\Delta(\cO)$. Using $p$ allows them to define a complete preference relation $\succeq_p$ over $\cX$ based on several well-established rationality frameworks~\citep{von2007theory,savage1972foundations}. Thereby, allowing the agent to select the corresponding best input $x^*$. This $x^*$ represents the best forecast to report in the case of a precise forecaster and the best action to take in the case of DM. However, in scenarios where the belief (or obtained report) for an agent is a set of presice beliefs $\cP\subseteq \Delta(\cO)$, the preference relation ($\succeq_{\cP}$) obtained on $\cX$ using $\cP$ is incomplete. In this case, a natural way to define $\succeq_{\cP}$ is based on the idea of dominance. 
\begin{definition}
\label{def:decision-making-with-imprecise-forecasts}
    Consider $\cP\subseteq \Delta(\cO)$, then the corresponding preference relation $\succeq_{\cP}$ over $\cX$ for a VNM rational~\citep{von2007theory} agent can be defined as follows: for all $x,x'\in\cX$,
    \begin{align*}
        x\succeq_{\cP} x'\quad \text{iff}\quad \mathbb{E}_{p}[u(x,o)]\geq \mathbb{E}_{p}[u(x',o)] \quad \forall p\in\cP.
    \end{align*}
\end{definition}
Unless $\cP$ is implicitly a precise forecast of type $\{p\in\Delta(\cO)\}$, the preference relation $\succeq_{\cP}$ is a partial order over $\cX$. The partial order $\succeq_{\cP}$ can be incomplete, since there can be a pair of inputs $x,x'\in\cX$ such that $x'\not\succeq_{\cP}x$ and $x\not\succeq_{\cP}x'$. In other words, $x$ and $x'$ are incomparable. This can result in indecision for the agent. This means that both the forecaster and the DM face indecision when they rely on $\cP$ for their respective tasks (elicitation or decision-making).

\subsection{Imprecise Forecaster}
Our work focuses on analyzing scoring rules in scenarios where the forecaster may be \emph{imprecise}. Specifically, we formalise the notion of an imprecise forecaster and their truthfulness below.
\begin{definition}\label{def:imprecise-forecast}
A forecaster is imprecise if their belief can be expressed as a set of distributions $\cP\subseteq\Delta(\cO)$. A report $\cQ\subseteq\Delta(\cO)$ is called an imprecise forecast, which implicitly includes precise forecasts $\cQ=\{q\}$ for some $q\in\Delta(\cO)$. 
\end{definition}
\Cref{def:imprecise-forecast} generalizes the precise setting as it allows the forecaster to express their (partial) ignorance by reporting both aleatoric uncertainties (as elements in the set) and epistemic uncertainties (as the set itself)~\citep{hullermeier_aleatoric_2021}. This subsumes both scenarios where the forecaster's belief is truly imprecise, e.g., the probability that it will rain tomorrow is $[0.6,0.8]$, and where their belief is calibrated with respect to multiple sources of potentially conflicting information, e.g., the estimated probability based on data from multiple weather stations. Moreover, this can also be interpreted as a ``collective'' report obtained from multiple (potentially conflicting) precise forecasters. Imprecise probability scoring rules can be defined analogously to precise scoring rules as follows.
\begin{definition}(Imprecise Probability Scoring Rule)
An imprecise probability (IP) scoring rule $s:2^{\Delta(\cO)}\times\cO\rightarrow\mathbb{R}\cup\{-\infty\}$ assigns a score of $s(\cQ,o)$ to a report $\cQ\subseteq \Delta(\cO)$ when the outcome $o\in\cO$ is realized.
\end{definition}
Analogous to precise setting, an IP scoring rule is \emph{regular} if $s(\cQ,o)\in\mathbb{R}$ for all $o\in\cO$, except if $q(o)=0$ for all $q\in\cQ$, then $s(\cQ,o)=-\infty$. To define regularity analogous to the precise setting we consider for all $q\in\cQ$, since otherwise reporting a vacuous set $\Delta(\cO)$ or other imprecise sets will have $-\infty$ as an incentive, thereby discouraging the forecaster from reporting their epistemic uncertainty. 
The score $s(\cQ,o)$ obtained by the forecaster induces a corresponding set of utilities $\bm{V}^\cP(\cQ)$ for the forecaster with an imprecise belief $\cP$, representing the expected utility of the imprecise score with respect to every distribution within their belief $\cP$.  We define this utility set as follows:
\begin{equation*}
    \bm{V}^\cP(\cQ)=\{\mathbb{E}_p[s(\cQ,o)]\}_{p\in\cP}
\end{equation*}
From the forecaster's perspective, this collection of expected utility functions $\bm{V}^\cP:2^{\Delta(\cO)}\rightarrow\mathbb{R}^{|\cP|}$, for each report $\cQ$ result in a range of plausible expected utility, i.e.,  
\begin{equation*}
    \mathrm{im}(\bm{V}^{\cP}(\cQ))=\Bigg[\inf_{p\in \cP}\mathbb{E}_{p}[s(\cQ,o)],\sup_{p\in \cP}\mathbb{E}_{p}[s(\cQ,o)]\Bigg]
\end{equation*}
where $\mathrm{im}$ is the image or the range of the forecaster's minimum and maximum expected score for forecast $\cP$ when its extreme points exist; see~\Cref{appendix:existence-of-extreme-points} for further details. 
While the equivalence of two precise distributions $p$ and $q$ is natural, i.e., $p=q$ or not. The equivalence of two imprecise beliefs is not obvious as they are sets of distributions. We now define the equivalence of two beliefs $\cP,\cP'$ in the context of elicitation as follows.

\begin{definition}(Equivalence of Imprecise Beliefs)
\label{def:equivalenceofimpreciseforecasts}
Two beliefs $\cP,\cP'\subseteq \Delta(\cO)$ are considered equivalent, denoted as $\cP\simeq \cP'$, if for all IP scoring rules $s$ and forecasts $\cQ\subseteq\Delta(\cO)$, we have $\mathrm{im}(\bm{V}^{\cP}(\cQ))=\mathrm{im}(\bm{V}^{\cP'}(\cQ))$.
\end{definition} 
Intuitively, two imprecise forecasts are equivalent if they yield the same range of plausible expected utilities for any scoring rule $s$ and reported forecast $\cQ$—that is, they induce identical decision-making. We now show that \Cref{def:equivalenceofimpreciseforecasts} reduces to the classic notion of equivalence between probability distributions when applied to precise forecasts.
\begin{proposition}
\label{prop:equivalenceofpreciseforecasts}
    For all $p,q\in \Delta(\cO)$, $\{p\}\simeq \{q\}$ iff $p=q$.  
\end{proposition}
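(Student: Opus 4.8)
The plan is to prove both directions of the equivalence, with the nontrivial direction being $\{p\} \simeq \{q\} \implies p = q$. The reverse direction is immediate: if $p = q$ then $\{p\} = \{q\}$ as sets, so trivially $\text{im}(\bm{V}^{\{p\}}(\cQ)) = \text{im}(\bm{V}^{\{q\}}(\cQ))$ for every IP scoring rule $s$ and every report $\cQ$, hence $\{p\} \simeq \{q\}$ by Definition~\ref{def:equivalenceofimpreciseforecasts}.

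For the forward direction, I would argue by contraposition: assume $p \neq q$ and exhibit a single IP scoring rule $s$ and a report $\cQ$ for which the ranges differ. The key observation is that for a singleton belief $\cP = \{p\}$, the utility set $\bm{V}^{\{p\}}(\cQ)$ is the singleton $\{\mathbb{E}_p[s(\cQ,o)]\}$, so its image is the degenerate interval $[\mathbb{E}_p[s(\cQ,o)], \mathbb{E}_p[s(\cQ,o)]]$; likewise $\text{im}(\bm{V}^{\{q\}}(\cQ)) = [\mathbb{E}_q[s(\cQ,o)], \mathbb{E}_q[s(\cQ,o)]]$. Hence $\{p\}\simeq\{q\}$ forces $\mathbb{E}_p[s(\cQ,o)] = \mathbb{E}_q[s(\cQ,o)]$ for \emph{all} IP scoring rules $s$ and all reports $\cQ$. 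Now restrict attention to precise reports $\cQ = \{r\}$ and IP scoring rules that extend a strictly proper precise scoring rule via $s(\{r\},o) := s_0(r,o)$, where $s_0$ is, say, the quadratic (or logarithmic) strictly proper scoring rule from Section~\ref{section:proper-scoring-rule-preliminary}. Taking $\cQ = \{p\}$ gives $\mathbb{E}_p[s_0(p,o)] = \mathbb{E}_q[s_0(p,o)]$, and taking $\cQ = \{q\}$ gives $\mathbb{E}_p[s_0(q,o)] = \mathbb{E}_q[s_0(q,o)]$. Combining these with strict propriety of $s_0$—which gives $\mathbb{E}_p[s_0(p,o)] > \mathbb{E}_p[s_0(q,o)]$ whenever $p \neq q$—yields $\mathbb{E}_q[s_0(q,o)] = \mathbb{E}_p[s_0(q,o)] < \mathbb{E}_p[s_0(p,o)] = \mathbb{E}_q[s_0(p,o)]$, contradicting strict propriety of $s_0$ applied at the belief $q$ (which requires $\mathbb{E}_q[s_0(q,o)] > \mathbb{E}_q[s_0(p,o)]$). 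This contradiction shows $p = q$.

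I expect the main (minor) obstacle to be a bookkeeping point rather than a conceptual one: verifying that an IP scoring rule obtained by restricting a strictly proper precise scoring rule to singleton reports is genuinely a \emph{regular} IP scoring rule in the sense defined just before Definition~\ref{def:equivalenceofimpreciseforecasts}, and that Definition~\ref{def:equivalenceofimpreciseforecasts} quantifies over a class of $s$ rich enough to include such rules (it quantifies over \emph{all} IP scoring rules, so this is fine, but one should state it explicitly). One should also be slightly careful if $s_0$ is the logarithmic rule, since it can take the value $-\infty$ when $p$ and $q$ have different supports; using the quadratic rule, which is everywhere finite, sidesteps this entirely and is the cleaner choice. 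No other step requires more than the definitions and Definition~\ref{def:strictlyproperscoringrule}.
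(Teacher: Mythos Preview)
Your proposal is correct and follows essentially the same approach as the paper: both directions are handled identically, and for the nontrivial direction the paper also restricts to a strictly proper precise scoring rule on singleton reports and uses $\cQ=\{p\}$ to force $p=q$. Your argument is in fact more carefully spelled out than the paper's, which jumps directly from $\mathbb{E}_p[s(\{p\},o)]=\mathbb{E}_q[s(\{p\},o)]$ to $p=q$ without making the second instantiation $\cQ=\{q\}$ and the resulting contradiction explicit; your discussion of regularity and the preference for the quadratic rule over the logarithmic one are sensible refinements not present in the paper.
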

With Proposition~\ref{prop:equivalenceofpreciseforecasts}, we establish that \Cref{def:equivalenceofimpreciseforecasts} generalises from the notion of equivalence of precise forecasts, i.e. distributions to imprecise forecasts. We can also characterize the equivalence of two imprecise forecasts as the equivalence of their corresponding credal sets. 
\begin{proposition}
\label{prop:credalsets}
For imprecise beliefs $\cP,\cP'\subseteq \Delta(\cO)$ with non-empty extreme points, $\cP\simeq \cP'$ iff $co(\cP)=co(\cP')$.   
\end{proposition} 
It has previously been shown that two sets of distributions must be credal sets to induce the same rational decision-making behaviour~\citep{troffaes2007decision,huntley2014decision,troffaes2014lower}. \Cref{def:equivalenceofimpreciseforecasts} defines the equivalence of two imprecise beliefs w.r.t elicitation and Proposition~\ref{prop:credalsets} establishes its equivalence to rational decision making. This allows us to consider elicitation as a decision-making task for the forecaster. 
As a consequence of Proposition ~\ref{prop:credalsets}, even though a forecaster believes in a set of probability distributions $\cP$. We restrict our focus to evaluating a credal set of forecasts $co(\cP)$. Therefore, from now on, we will assume that $\mathcal{P}$ is a convex set.


\begin{definition}(Truthfulness of Imprecise Forecaster)
\label{def:truthfulness}
Let $\cP\subseteq\Delta(\cO)$ be the true belief of an imprecise forecaster. A report $\cQ\subseteq\Delta(\cO)$ is truthful
if $\cQ\simeq\cP$.
\end{definition}

\Cref{def:truthfulness} generalizes the concept of truthfulness in the precise setting. An imprecise forecaster who reports their true belief is considered truthful. For instance, if the forecaster believes the probability of rain tomorrow lies within the interval $[0.6,0.8]$, then they must report their actual epistemic uncertainty by reporting the interval $[0.6,0.8]$.
\section{Proper IP Scoring Rules}
\label{sec:imprecisescoringrules}

In this section, we introduce proper imprecise scoring rules, i.e., scores that incentivize the truthful reporting of an imprecise forecaster according to \Cref{def:truthfulness}. We start by focusing only on the elicitation of the imprecise forecaster without any communication from the DM. The following definition clarifies what it means for an imprecise scoring rule to be (strictly) proper, which naturally generalises \Cref{def:decision-making-with-imprecise-forecasts} from the forecaster's perspective. 
\begin{definition}
\label{def:propernessofimprecisescoringrule}
An imprecise scoring rule is (strictly) proper if for all credal sets $\cP,\cQ\subseteq\Delta(\cO)$, the forecaster with an imprecise belief $\cP\not\simeq\cQ$ (strictly) prefers $\cP$ over $\cQ$, i.e., $\cP\succeq_{\cP}\cQ$. The preference relation $\succeq_{\cP}$ is described by the (strict) dominance of $\bm{V}^{\cP}(\cP)$ over $\bm{V}^{\cP}(\cQ)$, i.e.,
\begin{align*}
     \mathbb{E}_p[s(\cP,o)]\geq \mathbb{E}_p[s(\cQ,o)] \quad \text{for all }p\in\cP,
\end{align*}
for strict dominance, at least one $\geq$ has to be strictly greater.
\end{definition}
We define strict properness of an imprecise scoring rule in \Cref{def:propernessofimprecisescoringrule} using dominance since it preserves the main idea behind strictly proper scoring rules in the precise setting, i.e. to incentivise the forecaster to be truthful. A strictly proper IP scoring rule incentivises the imprecise forecaster to be truthful according to \Cref{def:truthfulness}. 
\begin{theorem}
    \label{theorem:nosaneproperscoringrule}
    There does not exist a strictly proper imprecise scoring rule $s$. In addition, for a scoring rule $s$ to be proper it must be constant across all forecasts.
\end{theorem}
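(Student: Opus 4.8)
The plan is to show that if an IP scoring rule $s$ is proper in the sense of \Cref{def:propernessofimprecisescoringrule}, then it must assign the same score (as a function of $o$) to every forecast, and that consequently no $s$ can be strictly proper. The key object to exploit is the tension between two very different true beliefs: a highly imprecise one (ultimately the vacuous $\cP = \Delta(\cO)$) and a precise one $\{p\}$. Since the preference relation $\succeq_{\cP}$ in the imprecise setting is pointwise dominance over all $p\in\cP$, propriety is a demanding condition: truthful reporting must beat every alternative report simultaneously at \emph{every} $p\in\cP$, which is hard to satisfy when $\cP$ is large.

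First I would take $\cP = \Delta(\cO)$, the set of all distributions. Propriety requires $\EE_p[s(\Delta(\cO),o)] \ge \EE_p[s(\cQ,o)]$ for \emph{all} $p\in\Delta(\cO)$ and all reports $\cQ$. Specializing $\cQ = \{q\}$ to precise forecasts and $p = \delta_o$ to point masses (or taking suprema/infima over $p$), this forces $s(\Delta(\cO),o) \ge s(\{q\},o)$ for every $q$ and every $o$. Next I would run the argument from the other side: for each precise belief $\{p\}$, propriety of the precise restriction gives $\EE_p[s(\{p\},o)] \ge \EE_p[s(\cQ,o)]$ for all $\cQ$; in particular, taking $\cQ = \Delta(\cO)$ yields $\EE_p[s(\{p\},o)] \ge \EE_p[s(\Delta(\cO),o)]$. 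Combined with the pointwise inequality $s(\Delta(\cO),o)\ge s(\{p\},o)$ just obtained, we get $\EE_p[s(\Delta(\cO),o) - s(\{p\},o)] \le 0$ with a nonnegative integrand, hence $s(\{p\},o) = s(\Delta(\cO),o)$ for $p$-almost every $o$. Letting $p$ range over all of $\Delta(\cO)$ (in particular point masses, using regularity to avoid the $-\infty$ exceptions) forces $s(\{p\},o)$ to be independent of $p$. A similar sandwiching argument extends this from precise forecasts to arbitrary credal sets $\cQ$: $s(\cQ,o)$ is squeezed between the common value for precise forecasts and $s(\Delta(\cO),o)$, so $s$ is constant in its first argument. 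Once $s(\cdot,o)$ is constant, $\EE_p[s(\cP,o)] = \EE_p[s(\cQ,o)]$ for all reports, so no report is strictly preferred to another and strict propriety is impossible; this proves both assertions.

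The step I expect to be the main obstacle is making the ``specialize to point masses'' maneuver fully rigorous when $\cO$ is a continuous outcome space: point masses $\delta_o$ may interact badly with the regularity exception ($s(\cQ,o) = -\infty$ when $q(o)=0$ for all $q\in\cQ$), so one cannot naively plug in Dirac measures against a report like $\{q\}$ with $q(o)=0$. The fix is to work with sequences of absolutely continuous distributions concentrating near $o$, or to phrase the extremal comparisons via $\sup_{p}$ and $\inf_{p}$ of the expected scores (which appear already in $\mathrm{im}(\bm V^{\cP}(\cQ))$) rather than with individual point evaluations, and to invoke Proposition~\ref{prop:credalsets} so that without loss of generality all sets under consideration are convex credal sets. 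Care is also needed that the conclusion ``constant across all forecasts'' is understood modulo the unavoidable $-\infty$ values and modulo the equivalence $\simeq$, i.e. $s(\cQ,\cdot)$ agrees for all $\cQ$ wherever it is finite.
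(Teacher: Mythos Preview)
Your proposal is correct and follows essentially the same route as the paper's proof: exploit the tension between the vacuous belief $\Delta(\cO)$ and precise beliefs $\{p\}$, sandwich the expected scores to obtain equality, then specialise to Dirac measures to turn expected equalities into pointwise ones, and finally extend to arbitrary credal sets $\cQ$. The only cosmetic difference is the order of operations: you pass to $p=\delta_o$ early to obtain the pointwise inequality $s(\Delta(\cO),o)\ge s(\cQ,o)$ and then close the sandwich with a ``nonnegative integrand, nonpositive integral $\Rightarrow$ zero $p$-a.e.'' step, whereas the paper first derives the expected equalities $\mathbb{E}_p[s(\Delta(\cO),o)]=\mathbb{E}_p[s(\{p\},o)]=\mathbb{E}_p[s(\cP,o)]$ and only then plugs in $p=\delta_o$; your caveat about reading ``constant'' modulo the regularity exception $s(\cQ,o)=-\infty$ is well placed and in fact more careful than the paper's own treatment of that point.
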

Similar impossibility results for imprecise forecasts have previously been reported in \citet{seidenfeld2012forecasting,mayo2015accuracy,schoenfield2017accuracy}. The implication of \Cref{theorem:nosaneproperscoringrule} is that under the current setup of an imprecise forecaster, any imprecise scoring rule satisfying properness in \Cref{def:propernessofimprecisescoringrule} has a constant score across all forecasts. 
We observe in \Cref{sec:dmwithforecasts} that agents face possible indecision while making decision with the credal set $\cP$. As a result, we observe in \Cref{theorem:nosaneproperscoringrule} that it is not possible to design a scoring rule that incentivises the imprecise forecaster to report their imprecise belief $\cP$ honestly. Unlike in the precise setting, where the forecaster had a complete preference relation on plausible reports (see \Cref{def:strictlyproperscoringrule}), the epistemic uncertainty of the imprecise forecaster only allows for an incomplete preference relation $\succeq_{\cP}$ over plausible reports. 
Without further information, the imprecise forecaster cannot complete this incomplete preference relation. 

\subsection{Aggregation Functions}
\label{subsection:aggregation}
To resolve indecision arising from epistemic uncertainty in the credal set $\cP$, the DM exercises a subjective choice through aggregation function $\rho$ to make $\succeq_{\cP}$ complete. The DM communicates the choice of $\rho$ to the forecaster \emph{prior} to elicitation, and the elicited credal set then informs downstream decisions for the DM. The resulting utility can be shared with the forecaster as an incentive.

\begin{definition}[Aggregation Function]\label{def:aggregate}
     For a credal set $\cQ$ an aggregation function \(\rho:{(\mathbb{R}^{\cX})}^{|\cQ|}\rightarrow\mathbb{R}^{\cX}\) combines multiple utilities via a positive linear combination, i.e., for any $x\in\cX$:
    $$\rho[\{\mathbb{E}_{q}[u(x,o)]\}_{q\in\cQ}] = \int_{q\in\cQ}\bm{w}(q) \mathbb{E}_q[u(x,o)]dq$$ 
    where $\bm{w}(q) \in \RR^{|\cQ|}_{\geq 0}$ for all $q\in\cQ$ depends on the expected utilities $\{\mathbb{E}_{q}[u(x,o)]\}_{q\in\cQ}$.
\end{definition}
We focus on linear aggregations because, according to \citet{harsanyi1955cardinal}, this class of aggregation rules uniquely satisfies both VNM axioms \citep{von2007theory} and Bayes Optimality \citep{brown1981complete}. Many popular aggregation functions such as utilitarian and egalitarian rules can be expressed with linear aggregations as they characterise relative utilitarianism \citep{dhillon1999relative}. 

For the utilitarian and egalitarian rules, the decision-making process from an agent's perspective is picking an $x\in\cX$. Either for the DM an $a\in\cA$ or for the forecaster a $\cQ\subseteq\Delta(\cO)$. Illustrating this from the DM's perspective, the utilitarian rule corresponds to the linear combination $\rho[\{\mathbb{E}_q[u(a,o)]\}_{q\in\cQ}] = \nicefrac{1}{|\cQ|}\sum_{q\in \cQ}\mathbb{E}_q[u(a,o)]$, whereas the egalitarian rule corresponds to
$\rho[\{\mathbb{E}_q[u(a,o)]\}_{q\in\cQ}] = \min_{q\in \cQ}\mathbb{E}_q[u(a,o)]$. Here, the weights $\bm{w}$ can be interpreted as $\bm{w}(q)=\nicefrac{1}{|\cQ|}$ and a one-hot vector, respectively. A VNM-rational DM (see Eq. \eqref{eq:forecaster-aggregation} for forecaster) uses $\rho$ to obtain the best action  $a^*_{\cQ,\rho}$: 
\begin{equation}
\label{eq:decision-maker-opt}
    a^*_{\cQ,\rho} = \arg\max_{a\in\cA}\; \rho[\{\mathbb{E}_{q}[u(a,o)]\}_{q\in\cQ}].
\end{equation}
Given the incomplete preference relation from a credal set $\cQ$, i.e., $\succeq_{\cQ}:=\{\succeq_q\}_{q\in\cQ}$, the aggregation rule $\rho$ allows us to define the corresponding complete preference relation $\succeq_{\rho[\cQ]}$, representing the aggregated utility from Equation~\eqref{eq:decision-maker-opt}. By abuse of notation, $\succeq_{\rho[\cQ]}$ represents the aggregation of utilities rather than the credal set.

\textbf{Axiomatisation of $\rho$:} When interpreting imprecise forecasts as a “collective” report of precise forecasters, a social choice perspective naturally emerges for the downstream DM. Although non-intuitive, this perspective applies even to a single-agent imprecise forecaster. Following \citet{arrow1950difficulty}, we outline three desirable properties of any aggregation rule $\rho$: Pareto Efficiency (PE), Independence from Irrelevant Alternatives (IIA), and Non-Dictatorship (ND).  
\begin{definition}[Pareto Efficiency]
\label{def:paretoefficiency}
    An aggregation rule $\rho$ is Pareto Efficient iff for all $x,x'\in\cX$,
    \begin{equation*}
       x\succeq_{\cQ} x' \implies x\succeq_{\rho[\cQ]} x'.
    \end{equation*}
\end{definition}
From the DM's perspective, $\cX=\cA$, and as a result, a Pareto efficient $\rho$ will respect the inherent partial order $\succeq_{\cQ}$ over actions which DM could infer from the reported credal set $\cQ$. Therefore, the DM can be assured that application of $\rho$ only resolves indecision and similarly for the forecaster when choosing the best report. Additionally, from the forecaster's perspective, a non-PE $\rho$ can distort recommendations of their forecast $\cQ$ of an action $a$ over $a'$. 
The aggregation rule that violates PE may result in the payment/score that misaligns with the forecaster's report.

\begin{definition}[IIA]
\label{def:IIA-scoringrule}
An aggregation function $\rho$ is considered IIA if preferences between $x,y\in \cX$, i.e., $x\succeq_{\rho[\cQ]}y$ or $y\succeq_{\rho[\cQ]}x$ is independent of whether any other $z$ is in $\cX$.
\end{definition}
Although cryptic, IIA is desirable to the DM. From the DM's perspective, $\cX=\cA$. Imagine a scenario where there exists a $z\in \cA$ such that both $x,y\in\cA$ dominate $z$ w.r.t. the partial order $\succeq_{\cQ}$, implying that $z$ is irrelevant to the DM under forecasts $\cQ$. However, if $\rho$ violates IIA, the post-aggregation preference $\succeq_{\rho[\cQ]}$ between $x$ and $y$ can be influenced by the presence or absence of $z$. This makes the DM vulnerable to strategic manipulation regarding the best action to take by adding or removing $z$, which in turn creates uncertainty for the forecaster about their own incentives.
\begin{definition}[Non-Dictatorship]
\label{def:non-dictatorship}
    An aggregation rule $\rho$ is said to be non-dictatorial if for a profile of preferences $\succeq_{\cQ}:=\{\succeq_q\}_{q\in\cQ}$ there does not exist $q\in\cQ$ (dictator) such that for all $x,y\in \cX$, $x\succeq_{q} y$ implies $x\succeq_{\rho[\cQ]}y$.
\end{definition}
From the downstream decision-making perspective for a DM, non-dictatorship is optional. However, when DM wants to communicate the aggregation rule $\rho$ to the forecaster and wishes to truthfully elicit their true belief, non-dictatorship becomes crucial. Given a dictatorial $\rho$, the forecaster can manipulate the DM by strategically reporting the dictator $q$. We discuss this more formally in \Cref{appendix:dictatorships}.   
\subsection{Proper IP scores with aggregation}
The DM communicates the aggregation function $\rho$ to the forecaster and incentivises them using an IP scoring rule. This communication helps resolve the forecaster's epistemic uncertainty, parameterizing the IP scoring rule as $s_\rho:2^{\Delta(\cO)}\times\cO\rightarrow\mathbb{R}$. The forecaster reports $\cQ\in 2^{\Delta(\cO)}$ and receives a score of $s_\rho(\cQ,o)$ when outcome $o\in\cO$ occurs. Unlike prior IP scoring rules, the forecaster can now use $\rho$ to resolve indecision and complete the preference relation over $2^{\Delta(\cO)}$. This is evident from the expected utility of the forecaster with belief $\cP$ when reporting $\cQ\subseteq\Delta(\cO)$:
\begin{equation}
\label{eq:forecaster-aggregation}
    V^{\cP}_{\rho}(\cQ):=\rho[\bm{V}^{\cP}(\cQ)]=\rho[\{\mathbb{E}_{p}[s_\rho(\cQ,o)]\}_{p\in\cP}].
\end{equation}
Since an imprecise decision scoring rule $s_\rho$ is simply a parameterised IP scoring rule, its regularity is defined exactly as in \Cref{sec:imprecisescoringrules}. We extend (strict) properness for IP scoring rules from \Cref{def:strictlyproperscoringrule} to aggregation as follows.
\begin{definition}
    \label{def:properscoringimprecisedecisionrule}
    A regular IP scoring rule $s_\rho$ for an aggregation function $\rho$ is proper if, for all $\cP\subseteq\Delta(\cO)$ and all $\cQ\not\simeq\cP$, $V^{\cP}_\rho(\cP) \geq V^{\cP}_\rho(\cQ)$. The IP scoring rule $s_\rho$ is strictly proper if and only if at least one of the inequalities is strict.
\end{definition}
Notably, strictness in \Cref{def:properscoringimprecisedecisionrule} adheres to the notion of truthfulness defined in \Cref{def:truthfulness}. Since DM needs to evaluate the forecaster, we employ the class of scoring rules that accommodate a DM in evaluating a forecast, called tailored scoring rules \citep{dawid2007geometry,richmond2008scoring,johnstone2011tailored}, following the ideas of business sharing proposed in \citet{savage1971elicitation}. We now define them in the context of aggregation functions for imprecise forecasts.
\begin{definition}[Tailored Scoring Rules] 
\label{def:tailoredscoringrule}
An IP scoring rule $s$ is tailored for a DM with utility function $u$ and aggregation function $\rho$, if for any $k,c\in\mathbb{R}_{\geq 0}$, the score is defined as 
    \begin{equation*}
        s_{\rho}(\cQ,o)=k u(a^*_{\cQ,\rho},o) + c .
    \end{equation*}
\end{definition}
In \Cref{def:tailoredscoringrule}, $k$ can be referred to as the business share obtained by the forecaster in the utility of the DM and $c$ is the fixed fee of the forecaster. Next, we show that the class of tailored scoring rules is proper for any choice of $\rho$,
\begin{proposition}
\label{prop:tailoredscoringrule}
    A tailored scoring rule $s_\rho$ is proper with respect to \Cref{def:properscoringimprecisedecisionrule} for any aggregation rule $\rho$.
\end{proposition}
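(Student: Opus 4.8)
The plan is to show that, for a tailored scoring rule, the forecaster's aggregated utility $V^{\cP}_\rho(\cQ)$ is maximised by reporting $\cQ \simeq \cP$, because reporting truthfully makes the DM's induced action the one that the forecaster (under the same aggregation $\rho$) would himself deem optimal. First I would unfold the definitions: for a tailored $s_\rho$ with $s_\rho(\cQ,o) = k\,u(a^*_{\cQ,\rho},o) + c$, linearity of expectation gives $\EE_p[s_\rho(\cQ,o)] = k\,\EE_p[u(a^*_{\cQ,\rho},o)] + c$ for each $p\in\cP$, and then, since $\rho$ is a positive linear combination (Definition~\ref{def:aggregate}) and $c$ is constant across the profile,
\[
V^{\cP}_\rho(\cQ) = \rho[\{\EE_p[s_\rho(\cQ,o)]\}_{p\in\cP}] = k\,\rho[\{\EE_p[u(a^*_{\cQ,\rho},o)]\}_{p\in\cP}] + c.
\]
So up to the nonnegative scaling $k$ and the additive constant $c$, the forecaster's objective when reporting $\cQ$ is exactly the DM-style aggregated utility of the action $a^*_{\cQ,\rho}$, evaluated against the forecaster's \emph{own} belief $\cP$.

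Next I would compare $\cQ = \cP$ against an arbitrary report $\cQ$. When the forecaster reports truthfully, $a^*_{\cP,\rho} = \arg\max_{a\in\cA}\rho[\{\EE_p[u(a,o)]\}_{p\in\cP}]$ by Equation~\eqref{eq:decision-maker-opt}, i.e.\ it maximises precisely the quantity $\rho[\{\EE_p[u(a,o)]\}_{p\in\cP}]$ that appears in the displayed expression for $V^{\cP}_\rho$. For any other report $\cQ$, the induced action $a^*_{\cQ,\rho}$ is still some element of $\cA$, hence
\[
\rho[\{\EE_p[u(a^*_{\cQ,\rho},o)]\}_{p\in\cP}] \leq \max_{a\in\cA}\rho[\{\EE_p[u(a,o)]\}_{p\in\cP}] = \rho[\{\EE_p[u(a^*_{\cP,\rho},o)]\}_{p\in\cP}].
\]
Multiplying by $k\geq 0$ and adding $c$ preserves the inequality, giving $V^{\cP}_\rho(\cP) \geq V^{\cP}_\rho(\cQ)$, which is exactly properness in the sense of Definition~\ref{def:properscoringimprecisedecisionrule}.

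One subtlety I would address carefully is that the aggregation $\rho$ applied in $V^{\cP}_\rho(\cP)$ uses weights $\bm{w}$ that, per Definition~\ref{def:aggregate}, may depend on the expected-utility profile being aggregated; the argument above only needs that $a^*_{\cP,\rho}$ is defined as the maximiser of the \emph{same} functional $a\mapsto\rho[\{\EE_p[u(a,o)]\}_{p\in\cP}]$ that governs the forecaster's payoff, so the data-dependence of the weights is harmless as long as one is consistent about which profile $\rho$ acts on. I would also note that the tie-breaking rule in the $\arg\max$ defining $a^*_{\cQ,\rho}$ does not affect the inequality, since any maximiser attains the same aggregated value. I do not expect a genuine obstacle here — the result is essentially the revelation-principle / "align the forecaster's incentive with the DM's decision" argument, and the only thing to watch is bookkeeping around the $\rho$-dependence of weights and the role of the constants $k,c$; strictness is deliberately \emph{not} claimed in this proposition (that is the content of the later randomized construction), so no converse or uniqueness argument is needed.
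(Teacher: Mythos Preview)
Your proposal is correct and follows essentially the same approach as the paper: both arguments hinge on the observation that the forecaster's aggregated payoff under a tailored rule reduces to the DM-style aggregated utility of the induced action $a^*_{\cQ,\rho}$ evaluated against the true belief $\cP$, and that $a^*_{\cP,\rho}$ is by definition the maximiser of exactly this functional. The paper phrases this as a contradiction (assume $V^{\cP}_\rho(\cQ)>V^{\cP}_\rho(\cP)$ and derive that $a^*_{\cP,\rho}$ is not optimal), whereas you give the direct version; you are also more explicit than the paper about the constants $k,c$ and the potential weight-dependence of $\rho$, which is a welcome clarification rather than a departure.
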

While necessary, the properness of scoring rules is easy to satisfy (see \Cref{theorem:nosaneproperscoringrule}). For example, a constant scoring rule is always  proper. We therefore characterise strict properness of $s_\rho$ for imprecise forecasts.
\begin{lemma}
Let $s_\rho$ be a tailored scoring rule. Then, the following holds:
\begin{enumerate}
\label{lemma:strictness-for-precise-distributions}
    \item $s_\rho$ is strictly proper for \textbf{precise distributions} if and only if $a^*_{q} := \arg\max_{a\in\cA}\mathbb{E}_{q}[u(a,o)]$ is a unique maximiser for all $q\in\Delta(\cO)$.
    \label{theorem:aggregation-strictness-impossibility}
    \item $s_\rho$ is not strictly proper, i.e., does not satisfy \Cref{def:properscoringimprecisedecisionrule}, for any Pareto efficient $\rho$.
\end{enumerate}
\end{lemma}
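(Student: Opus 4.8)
The plan is to analyze the forecaster's aggregated utility $V^{\cP}_\rho(\cQ) = \rho[\{\mathbb{E}_p[s_\rho(\cQ,o)]\}_{p\in\cP}]$ when $s_\rho$ is tailored, i.e. $s_\rho(\cQ,o) = k\,u(a^*_{\cQ,\rho},o) + c$. Since $a^*_{\cQ,\rho}$ depends on $\cQ$ but \emph{not} on $o$, we have $\mathbb{E}_p[s_\rho(\cQ,o)] = k\,\mathbb{E}_p[u(a^*_{\cQ,\rho},o)] + c$ for each $p$, and hence, by positive linearity of $\rho$ (it is a positive linear combination with weights summing appropriately), $V^{\cP}_\rho(\cQ) = k\,\rho[\{\mathbb{E}_p[u(a^*_{\cQ,\rho},o)]\}_{p\in\cP}] + c$. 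The key observation is that truthful reporting $\cQ = \cP$ makes the action $a^*_{\cP,\rho}$ the \emph{maximizer} of $\rho[\{\mathbb{E}_p[u(a,o)]\}_{p\in\cP}]$ over $a\in\cA$ by definition \eqref{eq:decision-maker-opt}, whereas an untruthful $\cQ$ plugs in $a^*_{\cQ,\rho}$, which is generally suboptimal for the aggregated objective evaluated under $\cP$. This already re-proves properness (Proposition~\ref{prop:tailoredscoringrule}); the question for strictness is whether the inequality $V^{\cP}_\rho(\cP) \geq V^{\cP}_\rho(\cQ)$ can be made strict in the sense of Definition~\ref{def:properscoringimprecisedecisionrule}.

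For part 1, I would specialize to $\cP = \{p\}$, $\cQ = \{q\}$ with $p \neq q$. Then $\rho$ acts trivially (single distribution), so $V^{\{p\}}_\rho(\{q\}) = k\,\mathbb{E}_p[u(a^*_q, o)] + c$ where $a^*_q = \arg\max_a \mathbb{E}_q[u(a,o)]$. Strict properness requires $\mathbb{E}_p[u(a^*_p,o)] > \mathbb{E}_p[u(a^*_q,o)]$ for all $q \neq p$. For the ``if'' direction: if every $a^*_q$ is the \emph{unique} maximizer of $\mathbb{E}_q[u(a,o)]$, I would argue that the map $q \mapsto a^*_q$ cannot be constant on any neighborhood unless the problem is degenerate — more carefully, since $a^*_p$ uniquely maximizes under $p$ and $a^*_q$ uniquely maximizes under $q$, if $a^*_q \ne a^*_p$ then $\mathbb{E}_p[u(a^*_p,o)] > \mathbb{E}_p[u(a^*_q,o)]$ strictly, giving strictness; and if $a^*_q = a^*_p$ then the scores are equal but one must check this is permitted only when forecasts are ``equivalent'' for the DM — here one needs the convention that we only need strictness for $\cQ \not\simeq \cP$, and one must confirm that distinct $p \ne q$ inducing the same optimal action is excluded, or handled, by the uniqueness hypothesis combined with ranging over the relevant action set. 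For the ``only if'' direction: if some $a^*_q$ is \emph{not} unique, pick $p$ to be (a perturbation toward) an alternative maximizer tie; then two different reports yield the same aggregated utility and strictness fails. I would make this precise by constructing an explicit tie.

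For part 2, the plan is a direct impossibility argument: suppose $\rho$ is Pareto efficient (Definition~\ref{def:paretoefficiency}) and suppose for contradiction that $s_\rho$ is strictly proper. Take a credal set $\cP$ that is not a singleton and consider a ``shrunk'' report $\cQ \subsetneq \cP$ (or an alternative credal set) with $\cQ \not\simeq \cP$ chosen so that $a^*_{\cQ,\rho} = a^*_{\cP,\rho}$ — this is possible because the aggregated optimizer depends only on the single aggregated utility vector, and many distinct credal sets (e.g. singletons $\{p_\cQ\}$ picking out the effective distribution) produce the same optimal action. Then $s_\rho(\cP,\cdot) = s_\rho(\cQ,\cdot)$ pointwise, so $V^{\cP}_\rho(\cP) = V^{\cP}_\rho(\cQ)$ with \emph{no} strict inequality, contradicting strict properness. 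The subtle point, and the main obstacle, is ensuring such a $\cQ$ with $\cQ \not\simeq \cP$ and $\cQ$ still a legitimate (closed convex) credal set always exists for \emph{every} Pareto efficient $\rho$; I expect to handle this by exhibiting, for any non-degenerate $\cA$ and $u$, a credal set $\cP$ large enough that collapsing it to its aggregated effective distribution strictly changes $\mathrm{co}(\cP)$ — invoking Proposition~\ref{prop:credalsets} for the $\not\simeq$ condition — while Pareto efficiency guarantees the collapsed report still points at the same dominant action. The hardest part will be ruling out pathological $\rho$ or trivial $(\cA,u)$ where no such separating $\cP$ exists, which likely requires a mild non-degeneracy assumption on the DM's decision problem that I would state explicitly.
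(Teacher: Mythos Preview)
Your high-level structure for both parts is right and matches the paper: reduce $V^\cP_\rho(\cQ)$ to $k\,\rho[\{\mathbb{E}_p[u(a^*_{\cQ,\rho},o)]\}_{p\in\cP}]+c$, then for Part~1 specialise to singletons, and for Part~2 exhibit a $\cQ\not\simeq\cP$ with $a^*_{\cQ,\rho}=a^*_{\cP,\rho}$. The differences are in execution.

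\textbf{Part 1.} The paper does not argue directly as you do; instead it invokes \Cref{theorem:gneiting} and \Cref{remark:gneiting}, identifying $G(q)=k\,\mathbb{E}_q[u(a^*_q,o)]+c$ and showing $G$ is strictly convex if and only if each $a^*_q$ is a unique maximizer. Your direct comparison $\mathbb{E}_p[u(a^*_p,o)]$ versus $\mathbb{E}_p[u(a^*_q,o)]$ is more elementary and avoids the machinery of convex characterisations, which is a virtue. The injectivity concern you flag (distinct $p\neq q$ with $a^*_p=a^*_q$) is real and the paper's proof does not handle it either; both arguments implicitly need the map $q\mapsto a^*_q$ to be injective, which with the paper's standing assumption that $\cA$ is finite cannot hold on all of $\Delta(\cO)$. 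So on Part~1 your route is different, simpler, and no less rigorous than the paper's.

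\textbf{Part 2.} Here there is a genuine gap in your proposed construction. You suggest collapsing $\cP$ to a singleton $\{p_\cQ\}$ ``picking out the effective distribution'' so that $a^*_{\{p_\cQ\}}=a^*_{\cP,\rho}$. This works when $\rho$ is a fixed linear (dictatorial) aggregation, but for a general Pareto-efficient $\rho$ such as min--max there need not be any $p\in\co(\cP)$ whose individual best action coincides with the aggregated best action: the max--min optimiser is typically a compromise that is suboptimal under every single $p$. So ``the aggregated optimiser depends only on the single aggregated utility vector'' is false for nonlinear $\rho$, and your singleton construction can fail precisely for the PE rules you need to cover. The paper instead constructs $\cQ$ with $\co(\cQ)\subset\co(\cP)$ not as a singleton but by specifying its induced partial order $\succeq_\cQ$: it keeps all $\succeq_\cP$-comparable pairs (forced by \Cref{lemma:additional-subset-q-preference}) and, on $\succeq_\cP$-incomparable pairs, either aligns $\succeq_\cQ$ with $\succeq_{\rho[\cP]}$ or leaves them incomparable; a case analysis using Pareto efficiency then gives $\succeq_{\rho[\cQ]}=\succeq_{\rho[\cP]}$ and hence $a^*_{\cQ,\rho}=a^*_{\cP,\rho}$. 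That is the missing idea in your plan: shrink $\cP$ in a way that \emph{refines the partial order toward} $\succeq_{\rho[\cP]}$, rather than collapse it to a point.
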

\Cref{lemma:strictness-for-precise-distributions} ensures the existence of non-constant proper IP scoring rules. Beyond this positive result, we observe that Pareto efficiency leads to the impossibility of truthful elicitation under \Cref{def:truthfulness}. Although $s_\rho$ in \Cref{lemma:strictness-for-precise-distributions} is not strictly proper for imprecise forecasts, it remains practical to implement while being proper for all forecasts and strictly proper for precise ones. We speculate that the properties of $s_\rho$ are optimal for deterministic scoring rules, given the prior impossibility of any real-valued strictly proper IP scoring rules \citep{seidenfeld2012forecasting}. To explore this further, we investigate whether allowing randomisation in the choice of aggregation rule can enable truthful elicitation.


\section{strictly proper IP scores}
\label{sec:rand-tailored-rules}

With the randomized choice of aggregation, the DM can pick an aggregation rule randomly post-elicitation to evaluate the reported forecast. The forecaster then becomes unaware of the aggregation function which can lead the forecaster back to indecision. To resolve the forecaster's indecision, the DM shares a distribution $\theta\in\Delta(\bm{\rho})$ where $\bm{\rho}$ is the class of aggregation functions the DM will pick from, thereby enabling the forecaster to resolve their indecision as follows:
\begin{equation}
\label{eq:random-tailored-scoring-truthfulness}
V^{\cP}_{\theta}(\cQ):= \mathbb{E}_{\rho\sim \theta}[V^{\cP}_{\rho}(\cQ)].
\end{equation}
This allows the tailored scoring rule $s_\rho$ to be randomized with respect to the random variable $\rho$. Analogous to \Cref{def:tailoredscoringrule} for tailored scoring rules, we now define randomized tailored scoring rule $s_{\theta}$. 
\begin{definition}
    A regular IP scoring rule $s_{\theta}$ is randomized tailored for a DM with a class of aggregation functions $\bm{\rho}$ and a distribution $\theta\in\Delta(\bm{\rho})$, if for any $k_\rho,c_\rho\in\mathbb{R}_{\geq0}$ and an arbitrary function $\Pi:2^{\Delta(\cO)}\rightarrow \mathbb{R}$, the score is defined as
    \begin{equation*}
        s_{\theta}(\cQ,o)(\rho) = \begin{cases}
        k_\rho u(a^*_{\rho,\cQ},o)+c_\rho &\text{if } \theta(\rho)>0 \\
        \Pi_o(\cQ) & \text{if } \theta(\rho)=0
        \end{cases}.
    \end{equation*}
\end{definition}
Given that we have now extended the tailored scoring rule to random variables, in a similar spirit to \Cref{def:properscoringimprecisedecisionrule} on properness of IP scoring rules with aggregation, we define properness of randomized tailored scoring rules as follows.
\begin{definition}
    \label{def:properscoring-tailored}
    A randomized tailored scoring rule $s_{\theta}$ for a distribution $\theta\in\Delta(\bm{\rho})$ and a class of aggregation rules $\bm{\rho}$, is considered proper if, for all $\cP,\cQ\subseteq\Delta(\cO)$ and $\cQ\not\simeq\cP$,
    \begin{equation}
    \label{eq:properscoring-tailored-random}
        V_\theta^\cP(\cP)\geq V_\theta^\cP(\cQ).
    \end{equation}
    $s_{\theta}$ is strictly proper if the inequality in \Cref{eq:properscoring-tailored-random} is strict.
\end{definition}
Again the strictness in \Cref{def:properscoring-tailored} adheres to the notion of truthfulness defined in \Cref{def:truthfulness}. We will establish this connection later in this section. We can observe from Equation~\ref{eq:random-tailored-scoring-truthfulness} that randomized tailored scoring rules are proper for any choice of $\theta\in\Delta(\bm{\rho})$ as a direct consequence of \Cref{prop:tailoredscoringrule}. Before we discuss how to build strictly proper IP scoring rules, we need to identify if there exists a unique representation of the credal set in the action space which will let the DM identify the credal set. 
\begin{lemma}
\label{lemma:unqiue-representation-of-credal-set}
For any reported credal set $\cQ\subseteq\Delta(\cO)$ and a DM using a utility function $u$ such that $a^*_q := \arg\max_{a\in\cA}\mathbb{E}_q[u(a,o)]$ is unique for all $q\in\Delta(\cO)$, the set of actions $\cA^{\ext}_\cQ:=\Big\{a^{*}_{q}\Big\}_{q\in\ext(\cQ)}$ acts as a unique representation of a credal set $\cQ$ in action space $\cA$.
\end{lemma}
The implication of unique representation $\cA^{\ext}_\cQ$ in the action space for any credal set $\cQ$ is that the DM is able to identify the credal set from the set of actions $\cA^{\ext}_\cQ$. In a naive analogy, all actions in $\cA^{\ext}_\cQ$ together act as a fingerprint of credal set $\cQ$ which can be uniquely incentivised by the DM to elicit $\cQ$. We now introduce a common class of linear aggregations to operationalise scoring rules based on \Cref{lemma:unqiue-representation-of-credal-set}.

\textbf{Fixed Linear Aggregations} is another common class of aggregation rules which aggregates the expected utilities of a credal set $\cQ$ for any input $x\in\cX$, i.e., $\{\mathbb{E}_q[u(x,o)]\}_{q\in\cQ}$,  into a convex combination of utilities with mixing coefficient $\bm{\lambda}\in\Delta^{|\cQ|}$ as 
\begin{align*}
    \rho_{\bm{\lambda}}[\{\mathbb{E}_q[u(x,o)]\}_{q\in\cQ}] &:= \int_{q\in\cQ}\bm{\lambda}(q)\mathbb{E}_q[u(x,o)]dq\\
    &=\mathbb{E}_{\int\bm{\lambda}(q)q dq}[u(x,o)] .
\end{align*}
Although the class of fixed linear aggregations is Pareto-efficient and non-dictatorial in classic social choice theory, in our setup, fixed linear aggregations are dictatorships as they directly aggregate the epistemic uncertainty. Due to \Cref{prop:credalsets}, a forecaster can report $\cQ$ or $\co(\cQ)$. We illustrate this with an example, for any report $\cQ\subseteq\Delta(\cO)$ and any choice of fixed linear aggregation $\bm{\lambda}$, we obtain $Q:=\bm{\lambda}^\top \cQ$. Even though $Q$ may not be in $\cQ$, it is guaranteed that $Q\in \co(\cQ)$, and therefore $Q$ acts as a dictator. This means that although the DM uses the full credal set in the sense of all extreme points to perform decision-making, their preference over actions can be fully represented by a precise belief $P\in co(\cP)$. 
From \Cref{subsection:aggregation}, non-dictatorship was only desirable due to the strategic manipulation by the forecaster. In the scenario where forecasters are unaware of the exact aggregation rule, using a random dictatorial $\rho_{\bm{\lambda}}$ allows the DM to keep PE and IIA. To this end, we show the strict properness of these randomized dictatorships. Since strict properness for imprecise forecasters implicitly requires strictness for precise forecasts, which means that the $s_{\theta}$ must satisfy \Cref{lemma:strictness-for-precise-distributions} for every $\bm{\lambda}$.
\begin{theorem}
\label{theorem:strictly-proper-imprecise-scoring-rule}
    Assuming $s_\theta$ to be strictly proper for precise distributions and $\bm{\rho}$ as fixed linear aggregations, $s_{\theta}$ is strictly proper for imprecise forecasts, i.e. $s_\theta$ is a strictly proper IP scoring rule if $\theta$ has full support over $\bm{\rho}$.
\end{theorem}
\Cref{theorem:strictly-proper-imprecise-scoring-rule} allows us to build strictly proper IP scoring rules which can be characterized as follows. A randomized tailored scoring rule $s_{\theta}$ made using the class of fixed linear aggregation rules is characterized as
\begin{align*}
    s_{\theta}(\cQ,o)(\bm{\lambda}) = \begin{cases}
        k_{\bm{\lambda}} u(a^*_{\rho_\lambda,\cQ},o)+c_{\bm{\lambda}} &\text{if } \theta(\bm{\lambda})>0 \\
        \Pi_o(\cQ) & \text{if } \theta(\bm{\lambda})=0
        \end{cases},
\end{align*}
where $\bm{\lambda}\in\Delta^{|\ext(\cQ)|}$ is considered strictly proper if $\text{supp}(\theta)=[0,1]$ where $\Pi:2^{\Delta(\cO)}\times \cO\rightarrow \mathbb{R}$ is an arbitrary regular scoring function. To verify the strict properness of our score, we conduct a simulation (see \Cref{appendix:simulation}).

In recent years, several frameworks have been proposed for learning that challenge the implicit assumptions made in standard ML pipeline about loss functions \citep{gopalan2021omnipredictors}) or preferences \citep{singh2024domain} of the users being known to the model trainer. They focus on training models that perform well for a class of losses or aggregation rules. Within our setup, these frameworks translate to the DM abstaining from sharing the exact aggregation rule with the forecaster. However, they are not exact implementations of the score we propose. Applying the proposed score to ML applications is one of the future research avenues. 

\vspace{-0.5em}
\section{Related Work}
\vspace{-0.5em}
\label{sec:related-work}
The work of \citet{frohlich2024scoring} is most closely related to ours. They also explore the generalization of proper scoring rules to imprecise forecasts, with a specific emphasis on calibration~\citep{dawid1982well}. While their focus is on imprecisions arising from data models, we address more general issues related to the elicitation of imprecise forecasts. Their findings demonstrate that, unlike in precise settings where proper scoring and calibration objectives align, these goals can diverge when dealing with imprecise forecasts—a result that parallels our own. However, their reliance on the min-max aggregation within their scoring framework limits their analysis to pessimistic decision-making, resulting in a scoring rule that only satisfies properness.


Impossibility results show that no continuous scoring rule over credal sets can satisfy strict incentive compatibility, calibration, and non-domination simultaneously~\citep{seidenfeld2012forecasting, mayo2015accuracy, schoenfield2017accuracy}. \citet{seidenfeld2012forecasting} proved that such rules must either weaken incentive compatibility or permit domination by precise forecasts. \citet{mayo2015accuracy} highlighted that these trade-offs can inadvertently reward false precision, while \citet{schoenfield2017accuracy} showed that any continuous rule is either constant or fails to calibrate in natural decision contexts. While our approach partly mitigates these issues, these impossibility results still constrain deterministic methods. Some view the lack of imprecise scoring rules analogous to precise ones as a fundamental trait of imprecision \citep{konek2015}. Building on this, \citet{konek2019} proposes a family of IP scoring rules based on the Hurwicz criterion, extended by \citet{konek2023} to formalize precision–robustness trade-offs axiomatically. Since the Hurwicz criterion yields Pareto-efficient aggregation, our results in Section~\ref{sec:imprecisescoringrules} directly apply to their framework, offering a social choice lens on these trade-offs.

Finally, our work is uniquely positioned at the intersection of proper scoring rules, forecast elicitation, and machine learning, providing novel perspectives on decision-making under uncertainty. Credal sets have become a mainstream approach for representing modelers’ imprecision with applications in prediction~\citep{singh2024domain, caprio_credal_2024}, uncertainty quantification~\citep{sale_is_2023,wang2024credal}, optimal transport~\citep{caprio2024optimal}, statistical hypothesis testing~\citep{chau2025credal}, and statistical distances~\citep{chau2025integral}, among others. To this end, our results concerning strictly proper scoring rules for credal sets are directly relevant to the challenges of learning and decision-making with credal sets, providing insights into fundamental problems and future research directions.

\vspace{-0.5em}
\section{Discussion}
\vspace{-0.5em}
\label{sec:discussion}
Our investigation of strictly proper IP scoring rules reveals that, unlike in the classical precise setting, forecasting under imprecision demands careful attention to the decision-making aspect within forecast elicitation. In traditional frameworks with strictly proper scoring rules, forecasters are simply expected utility maximizers, making the reporting decision straightforward. However, when forecasts are imprecise—represented as sets or intervals—forecasters cannot internally aggregate their epistemic uncertainty. Instead, they require an external aggregation rule to reconcile their credal set-induced preferences into a single forecast. 

This need for external decision guidance naturally connects to social choice theory. In our framework, the DM provides a collective aggregation rule that guides forecasters in resolving their uncertainty. This approach not only preserves incentive compatibility in the imprecise setting but also highlights the importance of designing scoring rules that balance accuracy and robustness. By explicitly integrating a social choice–inspired aggregation function into the elicitation process, our work offers new perspectives on collective decision-making, where imprecise forecasts can be viewed as forecasts of the ``collective.'' This highlights promising directions for future research on imprecise scoring rules. 



\newpage
\addcontentsline{toc}{section}{Appendix} 
\part{Appendix} 
\parttoc 

\appendix
\section{Additional Supporting lemmas and proofs}
\subsection{Proof of Remark~\ref{remark:gneiting}}
\begin{remark}
~\label{remark:gneiting}
    Scoring rule $s$ is (strictly) proper if and only if the corresponding (strictly) convex function $G(q)= \sum_{o\in\cO} s(q,o)q(o)$ 
\end{remark}
\begin{proof}
It follows from Theorem ~\ref{theorem:gneiting} that regular scoring rule $s$ is (strictly) proper if and only if there exists a corresponding (strictly) convex function $G$ on $\Delta(\cO)$ such that 
\begin{align}
    s(q,o)= G(q) - \sum_{o\in\cO} G'(q)(o) q(o) + G'(q)(o).
\end{align}
$(\Rightarrow)$ 
Let us assume that there exists a strictly proper scoring rule $s$. Then, according to Theorem ~\ref{theorem:gneiting} there exists a convex function $G:\Delta(\cO)\rightarrow \mathbb{R}$ such that 
\begin{align*}
    s(q,o) &= G(q) -  \sum_{o\in\cO} G'(q)(o) dq(o) + G'(q)(o) \\
    \mathbb{E}_{o\sim p}[s(q,o)] &= \mathbb{E}_{o\sim p}\bigg[G(q) -  \sum_{o\in\cO} G'(q)(o) dp(o) + G'(q)(o)\bigg] \tag*{(expert's true belief \mbox{$p$})} \\
    &= G(q) -  \sum_{o\in\cO} G'(q)(o) q(o) +  \sum_{o\in\cO} G'(q)(o) p(o),
\end{align*}
where $q$ is the true belief of the forecaster. Then, we consider the maximum expected score
\begin{align*}
     \sum_{o\in\cO} s(q,o)q(o) &:= \max_{p\in \Delta(\cO)} u_q(p)\tag*{(\mbox{$s$} is strictly proper)}\\
     &= \max_{p\in \Delta(\cO)} \mathbb{E}_{o\sim q}[s(p,o)]\\
    &= \max_{p\in \Delta(\cO)} \left\{G(p) -  \sum_{o\in\cO} G'(p)(o) p(o) +  \sum_{o\in\cO} G'(p)(o) q(o) \right\} \\
    &=G(p^*) -  \sum_{o\in\cO} G'(p^*)(o) p^*(o) +  \sum_{o\in\cO} G'(p^*)(o) q(o) \tag*{(\mbox{$p^*$} is the maximizer)}\\
    &= G(q) -  \sum_{o\in\cO} G'(q)(o) q(o) +  \sum_{o\in\cO} G'(q)(o) q(o) \tag*{(\mbox{$s$} is strictly proper so \mbox{$p^*=q$})}\\
    &= G(q).
\end{align*}

$(\Leftarrow)$

We define the (strictly) convex function $G$ using the expected score of some scoring rule $s$, i.e., $G(p)= \sum_{o\in\cO} s(p,o)p(o)$ and the subgradient $G'(p)=s(p,o)$. Then,
\begin{align*}
    G(p) - \sum_{o\in\cO} G'(p)(o) p(o) + G'(p)(o) &:=   \sum_{o\in\cO} s(p,o)p(o) -  \sum_{o\in\cO} s(p,o)p(o) + s(p,o)\\
                                    &= s(p,o)\\ 
\end{align*}
This implies that $s$ is a strictly proper scoring rule as a consequence of Theorem~\ref{theorem:gneiting}.
\end{proof}
\subsection{On existence of extreme points of $\cP$}
\label{appendix:existence-of-extreme-points}
If the set of probability distributions $\cP$ is infinite, the extreme points may not always exist. Therefore, we need to identify the conditions under which $\cP$ has a valid set of extreme points. We argue that for extreme points to exist for $\cP$, $\co(\cP)$ must equal $\co(\overline{\cP})$, where $\overline{\cP}$ is the closure of set $\cP$. 
\begin{table}[h]
    \centering
    \caption{A toy example on existence of $\ext(\cP)$ for $\cP$ made using $p_1,p_2\in\Delta(\cO)$}
    \begin{tabular}{c|c|c|c}
          \hline
          $\cP$ & Closed? & $ext(\cP)$ exists? & $\co(\cP)=\co(\overline{\cP})$\\
          \hline 
          $\{p_1,p_2\}$ & closed & yes & yes \\
         $\{p| p:=wp_1+(1-w)p_2\text{ }\forall w \in[0,1]-\{\frac{1}{2}\}\}$ & open & yes & yes\\
         $\{p| p:=wp_1+(1-w)p_2\text{ }\forall w \in(0,1)\}$ & open & no & no\\
         $\{p| p:=wp_1+(1-w)p_2\text{ }\forall w \in[0,1]\}$ & closed & yes & yes\\
         \hline 
    \end{tabular}
    \label{tab:my_label}
\end{table}

To show this we precisely define the extreme points of $\cP$, independent of the convex hull of $\cP$ as follows  
\begin{definition}
    Given a set $\cP$, we define the extreme points as $\ext(\cP)$ as the collection of $p\in \cP$ for which there does not exist a set of points $C\subseteq\cP\setminus\{p\}$ and a probability measure $w:\Delta\rightarrow[0,1]$ such that $p=\int_{C} w(q)dq$.
\end{definition}

For extreme points of a set to exist in general spaces, its convex hull must be compact according to Choquet's  Theorem~\cite{bishop1959representations}. To establish compactness of $\cP$ we first show that with an appropriate notion of distance $\Delta(\cO)$ can form a bounded metric space.  
\begin{proposition}
\label{prop:boundedspace}
    The metric space $(\Delta(\cO),d_{TV})$ is bounded, where $d_{TV}$ denotes the total variational distance between two probability distributions $p,q$ in terms of their corresponding probability measures $P,Q$ is defined as 
    \begin{equation*}
        d_{TV}(p,q):= \sup_{A\subseteq\cO}|P(A)-Q(A)|=\frac{1}{2}\int|p(o)-q(o)|do
    \end{equation*}
\end{proposition}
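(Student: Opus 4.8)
The plan is to show that the diameter of $(\Delta(\cO),d_{TV})$ is at most $1$, which immediately gives boundedness.

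First I would recall, for context, that $d_{TV}$ is a genuine metric on $\Delta(\cO)$: symmetry is immediate from either displayed expression, the triangle inequality follows from the triangle inequality for $|\cdot|$ applied inside the supremum (equivalently, inside the integral), and $d_{TV}(p,q)=0$ forces $P(A)=Q(A)$ for every measurable $A$, i.e. $p=q$. These facts are standard (see e.g. \citealt{levin2017markov}) and I would cite rather than reprove them, since only boundedness is being claimed here.

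Next comes the key estimate. Fix arbitrary $p,q\in\Delta(\cO)$. Working from the set-based expression, for every measurable $A\subseteq\cO$ we have $P(A)-Q(A)\le P(A)\le 1$ and, symmetrically, $Q(A)-P(A)\le Q(A)\le 1$, hence $|P(A)-Q(A)|\le 1$; taking the supremum over all measurable $A$ gives $d_{TV}(p,q)\le 1$. (Equivalently, from the integral form, $|p(o)-q(o)|\le p(o)+q(o)$ pointwise since $p,q\ge 0$, so $\tfrac12\int|p(o)-q(o)|\,do\le\tfrac12\left(\int p(o)\,do+\int q(o)\,do\right)=1$.) Since $p,q$ were arbitrary, $\sup_{p,q\in\Delta(\cO)}d_{TV}(p,q)\le 1<\infty$; in particular $\Delta(\cO)$ is contained in the closed ball of radius $1$ about any fixed $p_0\in\Delta(\cO)$, so it is bounded. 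This boundedness is the first of the ingredients needed later to invoke Choquet-type representation results for $\cP$.

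There is essentially no obstacle here: the statement is a one-line consequence of the elementary fact that total variation distance never exceeds $1$. The only mild subtlety is that the integral formula presupposes $p,q$ have densities with respect to a common dominating measure; this is harmless (one may always take $\tfrac12(P+Q)$ as the dominating measure), and in any case the supremum expression $\sup_{A}|P(A)-Q(A)|$ is well defined for every pair of probability measures and is what the bound actually rests on.
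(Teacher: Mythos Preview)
Your proposal is correct and takes essentially the same approach as the paper: both bound $d_{TV}(p,q)\le 1$ via the pointwise inequality $|p(o)-q(o)|\le p(o)+q(o)$ and then integrate. Your version is slightly more careful (the paper writes a strict inequality where it should be $\le$, and your extra argument via the supremum form and your remark about the dominating measure are nice touches), but the substance is identical.
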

\begin{proof}
    As defined above, the total variational distance is half the L1 distance ~\cite{levin2017markov}. This allows us to express the total variation distance directly using densities. To show $(\Delta(\cO),d_{TV})$ is bounded, let $p,q$ be arbitrary distributions in $\Delta(\cO)$. 
    Then 
    \begin{align*}
        d_{TV}(p,q)&:= \frac{1}{2}\int|p(o)-q(o)|do\\
                    &< \frac{1}{2}\int |p(o)|+|-q(o)|do && (\text{ Triangle Inequality})\\
                    &= \frac{1}{2}\int |p(o)|do + \frac{1}{2}\int |q(o)| do \\
                    &= \frac{1}{2}\int p(o) do + \frac{1}{2}\int q(o) do && (p(o)\geq 0\text{ and } q(o)\geq 0)\\
                    &= \frac{1}{2}+\frac{1}{2} = 1
    \end{align*}
    Thus $(\Delta(\cO),d_{TV})$ is a bounded metric space.  
\end{proof}

We now discuss the conditions on $\cP$ such that $\ext(\cP)\subseteq\cP$. 
\begin{proposition}
\label{prop:whatconditionsonPforgoodbehaviour}
    There exists a probability measure $w\in\Delta(\ext(\cP))$ for all $p\in\cP$ such that 
    \begin{equation*}
        p = \int_{p\in\ext(\cP)} w(p)dp
    \end{equation*}
    iff $\co(\cP)=\co(\overline{\cP})$, where $\co(\overline{\cP})$ denotes the convex hull of the closure of $\cP$ when $\cO$ is finite. And for cases where $\cO$ is an infinite continuous set, $\co(\cP)$ must additionally be totally bounded.
\end{proposition}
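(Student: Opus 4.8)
The plan is to recognize this as a Choquet-type barycentric representation statement, reduce it to a compactness question, and then invoke the Choquet representation together with Milman's partial converse to the Krein--Milman theorem. \textbf{Step 1 (pass to a compact convex set).} Put $K:=\overline{\co(\cP)}$, the closure of the convex hull of $\cP$ taken inside the Banach space of finite signed measures on $(\cO,\cF)$ carrying the norm underlying $d_{TV}$. When $\cO$ is finite, $\Delta(\cO)$ is a compact convex subset of $\RR^{|\cO|}$, so $K$ is automatically compact and convex. When $\cO$ is an infinite continuous set, I would use \Cref{prop:boundedspace} together with the total-boundedness hypothesis on $\co(\cP)$: a totally bounded subset of the complete space $(\Delta(\cO),d_{TV})$ has compact closure, so $\overline{\cP}$ is compact, and the closed convex hull of a compact subset of a Banach space is compact, so $K$ is compact. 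In all cases $K$ is metrizable, which is what allows the Choquet representation to be realized by measures carried by $\ext(K)$ itself rather than only by $\overline{\ext(K)}$.

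\textbf{Step 2 (the implication $\co(\cP)=\co(\overline{\cP})\Rightarrow$ representation).} By Choquet's theorem for metrizable compact convex sets \citep{bishop1959representations} (Minkowski's theorem and Carath\'eodory when $\cO$ is finite), every $x\in K$ is the barycenter of a Borel probability measure supported on $\ext(K)$. It then suffices to show $\ext(K)=\ext(\cP)$ in the intrinsic (barycenter-irreducibility) sense used here, since $\cP\subseteq K$ and so every $p\in\cP$ inherits the representation $p=\int_{\ext(\cP)}q\,\mathrm{d}w(q)$. For $\ext(K)\subseteq\ext(\cP)$: Milman's theorem gives $\ext(K)\subseteq\overline{\cP}$, and combining with the hypothesis, any $e\in\ext(K)$ lies in $\overline{\cP}\subseteq\co(\overline{\cP})=\co(\cP)$, hence is a finite convex combination of points of $\cP\subseteq K$; extremality in $K$ forces $e$ to equal one of them, so $e\in\cP$, and then, since an extreme point of $K$ is the barycenter of no probability measure on $K$ other than its Dirac mass, $e$ cannot be the barycenter of any probability measure on $\cP\setminus\{e\}$, i.e.\ $e\in\ext(\cP)$. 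For $\ext(\cP)\subseteq\ext(K)$: if $e\in\ext(\cP)$ were a nontrivial midpoint $e=\tfrac{1}{2}(y+z)$ in $K$, replacing $y,z$ by their Choquet measures on $\ext(K)\subseteq\cP$ exhibits $e$ as the barycenter of a probability measure $\nu$ on $\cP$; a short case analysis on $\nu(\{e\})$ either degenerates to $y=z=e$ or produces a probability measure on $\cP\setminus\{e\}$ with barycenter $e$, contradicting $e\in\ext(\cP)$.

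\textbf{Step 3 (the converse).} Here I would argue by contraposition: when $\co(\cP)\neq\co(\overline{\cP})$ --- or, in the infinite case, when $\co(\cP)$ fails to be totally bounded --- $\ext(\cP)$ is too sparse to recover all of $\cP$ by barycenters. The model situation is $\cP$ relatively open (e.g.\ an open probability interval), where $\ext(\cP)=\emptyset$, so no $p\in\cP$ admits any representation while $\co(\cP)\subsetneq\co(\overline{\cP})$. In general I would show that failure of $\co(\cP)=\co(\overline{\cP})$ forces $\cP$ to contain a point lying outside $\overline{\co}(\ext(\cP))$, so no probability measure on $\ext(\cP)$ can have it as a barycenter (barycenters of such measures stay in $\overline{\co}(\ext(\cP))$); in the infinite-dimensional case one additionally notes that without total boundedness the families of candidate representing measures need not have weak-$\ast$ limit points, which is the mechanism that otherwise manufactures extremal representations.

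\textbf{Main obstacle.} The crux is that the $\ext(\cP)$ used here is intrinsic to $\cP$ and can in general be strictly smaller than $\ext\!\big(\overline{\co(\cP)}\big)\cap\cP$ (e.g.\ for a disk with one boundary point removed); the hypothesis $\co(\cP)=\co(\overline{\cP})$ is precisely the condition that collapses this gap, and making the identification $\ext(K)=\ext(\cP)$ watertight --- together with, in the infinite-dimensional total-variation setting, the passage from total boundedness to compactness of $\overline{\co(\cP)}$ --- is where the substantive work sits.
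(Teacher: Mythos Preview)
Your proposal is correct and shares the same skeleton as the paper's argument: use the hypothesis to obtain compactness of the relevant convex set (via Heine--Borel in the finite-$\cO$ case, and via total boundedness plus completeness in the infinite case), then invoke Choquet's theorem to produce the barycentric representation. The paper's proof is considerably terser than yours: it works directly with $\co(\cP)$, observes that the hypothesis $\co(\cP)=\co(\overline{\cP})$ makes $\co(\cP)$ closed (since the convex hull of a compact set is closed in finite dimensions), concludes compactness, and applies Choquet without further comment. In particular, the paper does not explicitly argue the identification $\ext(K)=\ext(\cP)$ that you establish via Milman's partial converse, nor does it address the converse direction of the ``iff'' at all. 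So your route is the same at the level of the key ingredients (Heine--Borel and Choquet), but you supply substantially more connective tissue---the Milman step and the contrapositive sketch for Step~3---that the paper leaves implicit or omits. What this buys you is a proof that actually engages with the intrinsic definition of $\ext(\cP)$ given just before the proposition; what the paper's brevity buys is simply a faster reduction to the cited theorems.
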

\begin{proof}
    The above result is a direct implication of the Heine-Borel Theorem (Theorem 2.41, \citep{rudin1976}) and Choquet's theorem~\citep{bishop1959representations}. First we discuss the proof for the case where $\cO$ is finite. Since $\cP\subseteq\Delta(\cO)$, using ~\ref{prop:boundedspace} we can say that $\cP$ is bounded. This means that $\co(\cP)$ is also bounded. Now, we know that the convex hull of a closed set $\overline{\cP}$ is also closed. Therefore, $\co(\overline{\cP})$ is closed and since $\co(\cP)=\co(\overline{\cP})$, $\co(\cP)$ is also closed. This makes $\co(\cP)$ compact as it is both bounded and closed by Heine-Borel Theorem. Now we can directly apply Choquet's theorem to obtain a probability measure $w$ for every $p\in\cP$ such that $p = \int_{p\in\ext(\cP)} w(p)dp$. In case when $\cO$ is an infinite continuous set, we are dealing with $\cP\subseteq\Delta(\cO)$, where $\Delta(\cO)$ may not have Heine-Borel Property, thus $\co(\cP)$ being totally bounded in addition to closed ensures that $\co(\cP)$ is compact and therefore Choquet's theorem is applicable. 
\end{proof}
The proposition ~\ref{prop:whatconditionsonPforgoodbehaviour} tries to identify what conditions should $\cP$ satisfy so that we can interpret $\co(\cP)$, i.e., the convex hull of $\cP$ as a credal set with valid extreme points $\ext(\cP)$. The general condition is that $\co(\cP)=\co(\overline{\cP})$ as a condition on $\cP$. Equivalently, the condition on $\co(\cP)$ is that it is closed. Notice that for finite $\cP$, it is trivially satisfied. This allows us to exclude $\cP=(0,\frac{1}{2})$ type of open sets from our discussion since they are open sets and its convex hull will violate closedness i.e. $\co(\cP)=\cP=(0,\frac{1}{2})$. Depending on the convention, if credal sets for $\cP$ are defined as the closure of their convex hulls, i.e., $\overline{\co}(\cP)$, then credal sets are compact (Heine-Borel Theorem) and Proposition ~\ref{prop:whatconditionsonPforgoodbehaviour} is applicable. Thus for our , we will restrict our discussion to $\cP$ such that $\co(\cP)=\co(\overline{\cP})$.
\subsection{Lower and Upper probabilities are always extreme points}
\begin{lemma}
\label{lemma:pinfandpsupinextP}
Let $\cP\subseteq\Delta(\cO)$ be the forecaster's belief and $\ext(\cP)$ the extreme points of the convex hull generated by $\cP\subseteq\Delta(\cO)$. Given any scoring rule $s:2^{\Delta(\cO)}\times \cO\rightarrow\mathbb{R}$ and forecasts $\cQ\subseteq\Delta(\cO)$, let
\begin{equation*}
    p_{\text{L}}^{(s,\cQ)} := \arg\inf_{p\in\cP}\mathbb{E}_{p}[s(\cQ,o)], \quad\quad p_{\text{U}}^{(s,\cQ)} := \arg\sup_{p\in\cP}\mathbb{E}_{p}[s(\cQ,o)] .
\end{equation*}
Then, both $p_{\text{L}}^{(s,\cQ)}$ and $p_{\text{U}}^{(s,\cQ)}$ belong to $\ext(\cP)$ for all pairs of $s$ and $\cQ$. In addition, $\cP\simeq \ext(\cP)$.
\end{lemma}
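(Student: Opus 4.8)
The structural fact driving everything is that, for a fixed forecast $\cQ$ and scoring rule $s$, the functional
\[
f_{s,\cQ}(p) \;:=\; \mathbb{E}_{p}[s(\cQ,o)] \;=\; \int_{\cO} s(\cQ,o)\,dp(o)
\]
is \emph{affine} in $p$: the integrand $o\mapsto s(\cQ,o)$ is a fixed $\mathbb{R}$-valued function, and expectation depends linearly on the underlying measure. Under the conventions of this appendix we take $\cP=\co(\cP)$, and by \Cref{prop:boundedspace} together with \Cref{prop:whatconditionsonPforgoodbehaviour} this set is compact (closed and totally bounded in $d_{TV}$) with a well-defined extreme-point set $\ext(\cP)\subseteq\cP$. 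So the claim reduces to the classical principle that an affine continuous functional on a compact convex set is extremized at extreme points.

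First I would treat $p_{\mathrm U}^{(s,\cQ)}$. By compactness $f_{s,\cQ}$ attains its supremum on $\cP$; the set of maximizers is a nonempty compact convex face $F\subseteq\cP$, which by Krein--Milman has an extreme point $p^{\ast}$, and an extreme point of a face of $\cP$ is again an extreme point of $\cP$, so $p^{\ast}\in\ext(\cP)$; since $p^{\ast}$ maximizes $f_{s,\cQ}$ we may take $p_{\mathrm U}^{(s,\cQ)}=p^{\ast}$. Applying the same reasoning to $-f_{s,\cQ}$ handles $p_{\mathrm L}^{(s,\cQ)}$ (equivalently, one invokes Bauer's maximum principle directly on $f_{s,\cQ}$ and $-f_{s,\cQ}$). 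One can also argue concretely via the Choquet representation furnished by \Cref{prop:whatconditionsonPforgoodbehaviour}: writing $p\in\cP$ as a barycenter $p=\int_{\ext(\cP)}w(r)\,r\,dr$ gives $f_{s,\cQ}(p)=\int_{\ext(\cP)}w(r)\,f_{s,\cQ}(r)\,dr$, so $f_{s,\cQ}(p)$ always lies between the infimum and the supremum of $f_{s,\cQ}$ over $\ext(\cP)$; hence $\inf_{p\in\cP}f_{s,\cQ}(p)=\inf_{p\in\ext(\cP)}f_{s,\cQ}(p)$ and likewise for the supremum.

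For $\cP\simeq\ext(\cP)$ I would go straight through \Cref{def:equivalenceofimpreciseforecasts}: it suffices to check $\text{im}(\bm{V}^{\cP}(\cQ))=\text{im}(\bm{V}^{\ext(\cP)}(\cQ))$ for every $s$ and $\cQ$. Each side is the interval whose endpoints are the infimum and supremum of $f_{s,\cQ}$ over $\cP$, respectively over $\ext(\cP)$; since $\ext(\cP)\subseteq\cP$ one inclusion of the intervals is immediate, and the first part of the lemma --- that these extrema over $\cP$ are attained inside $\ext(\cP)$ --- supplies the reverse, so the two intervals coincide. The main obstacle is not the optimization but the topological bookkeeping: one must know that $\cP=\co(\cP)$ is genuinely compact in a topology making $f_{s,\cQ}$ continuous (this is exactly what the total-boundedness hypothesis buys), and for an infinite outcome space $\cO$ one should note that finiteness of the infimum and supremum in the statement already forces $s(\cQ,\cdot)$ to be integrable against every $p\in\cP$, so that $f_{s,\cQ}$ is well defined. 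A minor subtlety worth flagging is that $\arg\inf$ and $\arg\sup$ must be read as ``some minimizer/maximizer,'' since an affine functional can be extremized on an entire face; the face-plus-Krein--Milman step is precisely what lets us select one lying in $\ext(\cP)$.
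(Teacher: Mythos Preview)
Your proposal is correct and overlaps substantially with the paper's argument: both hinge on the affineness of $p\mapsto\mathbb{E}_p[s(\cQ,o)]$ and the fact that such a map is extremized on $\ext(\cP)$. The paper carries this out by contradiction---assuming the extremizer is not extreme, writing it as a barycenter over $\ext(\cP)$, and deriving a strict inequality---which is precisely your ``Choquet representation'' alternative. Your primary route via Krein--Milman/Bauer's maximum principle is a cleaner and more general packaging of the same idea: it names the abstract principle directly, avoids the contradiction detour, and correctly handles the case where the extremizer is non-unique (your face-plus-extreme-point step), which the paper implicitly sidesteps by writing $\arg\inf$ and $\arg\sup$ as if they were single-valued. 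You also flag the topological hypotheses (compactness, continuity of $f_{s,\cQ}$) needed to make the argument rigorous, which the paper leaves implicit. The equivalence $\cP\simeq\ext(\cP)$ is handled identically in both.
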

\begin{proof}
Firstly, for all $p \in \cP$, either $p \in \ext(\cP)$ or $p \not\in \ext(\cP)$. This follows trivially from the definition of extreme points of a convex hull in section \ref{subsection:Imprecise-probability-and-credal-sets}. 

The proof proceeds as follows. In \textbf{(i)} and \textbf{(ii)}, we show that $p_{\text{L}}^{(s,\cQ)},p_{\text{U}}^{(s,\cQ)}\in \ext(\cP)$ for all pairs of $s$ and $\cQ$, respectively, with a contradiction. 
Then, given \textbf{(i)} and \textbf{(ii)}, $\cP\simeq \ext(\cP)$ follows from  Definition \ref{def:equivalenceofimpreciseforecasts} for the equivalence of imprecise beliefs.

\textbf{(i) Lower probability:} For all $s$ and $\cQ$, $p_{\text{L}}^{(s,\cQ)}\in \ext(\cP)$.

We prove this by contradiction. Let us first assume there exists a pair of $s,\cQ$ such that $p_{\text{L}}^{(s,\cQ)}\in \cP\setminus \ext(\cP)$. Since we have fixed $s$ and $\cQ$, we drop the superscript from $p_{\text{L}}^{(s,\cQ)}$ for readability and treat $p_{\text{L}}$ as a distribution in $\cP$. 
Next, given $p_{\text{L}}\in \cP\setminus \ext(\cP)$, it implies that there exists a second order distribution $w\in \Delta(\ext(\cP))$ such that $w(p)>0$ for all $p\in \ext(\cP)$.  
\begin{align*}
    p_{\text{L}} &= \int_{p\in \ext(\cP)}w(p)dp \\
(\implies)\quad \mathbb{E}_{p_{\text{L}}}[s(\cQ,o)]&=  \mathbb{E}_{\int_{p\in \ext(\cP)}w(p)dp}[s(\cQ,o)] \\
&= \int_{p\in \ext(\cP)}w(p)\mathbb{E}_{p}[s(\cQ,o)]dp\\
&> \inf_{p\in \ext(\cP)}\mathbb{E}_{p}[s(\cQ,o)]. && (w(p)>0 \text{ for all } p\in \ext(\cP))
\end{align*}
This results in a contradiction because $\ext(\cP)\subseteq \cP$. Therefore, $p_{\text{L}}\in ext(\cP)$.Since our choice of $\cQ$ and $s$ was arbitrary, the contradiction holds for all $\cQ\subseteq\Delta(\cO)$ and $s$. Therefore,  $p_{\text{L}}^{(s,\cQ)}\in \ext(\cP)$ for all and $s$.

\textbf{(ii) Upper probability:} For all $s$ and $\cQ$, $p_{\text{U}}^{(s,\cQ)}\in \ext(\cP)$.

Similarly, we show that $p_{\text{U}}\in \ext(\cP)$. Suppose that $p_{\text{U}}\in \cP\setminus \ext(\cP)$. This implies that there exists a second order distribution $w\in \Delta(\ext(\cP))$ such that $w(p)>0$ for all $p\in \ext(\cP)$ and 
\begin{align*}
    p_{\text{U}} &= \int_{p\in \ext(\cP)}w(p)dp\\
(\implies)\quad \mathbb{E}_{p_{\text{U}}}[s(\cQ,o)] &=  \mathbb{E}_{\int_{p\in \ext(\cP)}w(p)dp}[s(\cQ,o)] \\
&= \int_{p\in \ext(\cP)}w(p)\mathbb{E}_{p}[s(\cQ,o)]dp\\
&< \sup_{p\in \ext(\cP)}\mathbb{E}_{p}[s(\cQ,o)]. && (w(p)>0 \text{ for all } p\in \ext(\cP))
\end{align*}
This also results in a contradiction since $\ext(\cP)\subseteq \cP$. Hence, both $p_{\text{L}}$ and $p_{\text{U}}$ belong to $\ext(\cP)$. 

\textbf{Equivalence of $\cP$ and $\ext(\cP)$:} Next, we show that $\cP$ and $\ext(\cP)$ are equivalent by applying Definition \ref{def:equivalenceofimpreciseforecasts}. For any reported set of beliefs $Q\subseteq \Delta(\cO)$ and scoring rule $s$, 
\begin{align*}
    \text{im}(\bm{V}^{\cP}(\cQ))&=\left[\inf_{p\in \cP}\mathbb{E}_{p}[s(\cQ,o)], \;\sup_{p\in \cP}\mathbb{E}_{p}[s(\cQ,o)]\right]\\
    &= \left[\mathbb{E}_{p_{\text{L}}}[s(\cQ,o)], \; \mathbb{E}_{p_{\text{U}}}[s(\cQ,o)]\right]\\
    &= \left[\inf_{p\in \ext(\cP)}\mathbb{E}_{p}[s(\cQ,o)],\sup_{p\in \ext(\cP)}\mathbb{E}_{p}[s(\cQ,o)]\right]  && (p_{\text{L}},p_{\text{U}} \in \ext(\cP)\text{ and } \ext(\cP)\subseteq \cP)\\
    &= \text{im}(\bm{V}^{\ext(\cP)}(\cQ)).
\end{align*}
This completes the proof.
\end{proof}
\subsection{Equivalence of extreme points for elicitation}
\begin{lemma}
\label{lemma:equivalentPimpliessameexteremepoints}
If two beliefs $\cP,\cP'\subseteq\Delta(\cO)$ are equivalent, i.e., $\cP\simeq\cP'$, then $\ext(\cP)=\ext(\cP')$.    
\end{lemma}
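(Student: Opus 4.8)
The plan is to prove the contrapositive-style implication directly: assuming $\cP \simeq \cP'$, I want to conclude $\ext(\cP) = \ext(\cP')$. The key observation is that by Proposition~\ref{prop:credalsets}, $\cP \simeq \cP'$ is equivalent to $\co(\cP) = \co(\cP')$. So it suffices to show that two sets with the same convex hull have the same extreme points, where $\ext(\cdot)$ is the notion defined just before Lemma~\ref{lemma:pinfandpsupinextP} (a point $p$ admitting no representation $p = \int_C w(q)\,dq$ with $C \subseteq \cP \setminus \{p\}$ and $w$ a probability measure).

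First I would recall that the extreme points of a convex set $K = \co(\cP)$ — in the classical Minkowski/Choquet sense — are intrinsic to $K$ and do not depend on which generating set $\cP$ was used. So the heart of the argument is to reconcile the paper's ``generating-set-relative'' definition of $\ext(\cP)$ with the intrinsic extreme points of $\co(\cP)$. Concretely I would argue: (i) if $p \in \ext(\cP)$ in the paper's sense, then $p$ is an extreme point of $\co(\cP)$ — because any representation of $p$ as a (continuous) convex combination of points of $\co(\cP)$ other than $p$ could be pushed down, via the barycentric/averaging structure, to a representation over points of $\cP \setminus \{p\}$ (using that $\co(\cP)$ points are themselves barycenters of $\cP$), contradicting $p \in \ext(\cP)$; and conversely (ii) if $p$ is an intrinsic extreme point of $\co(\cP)$, then in particular $p \in \cP$ (extreme points of the convex hull of $\cP$ must lie in $\cP$, under the closedness/compactness hypotheses invoked in Proposition~\ref{prop:whatconditionsonPforgoodbehaviour}), and no representation over $\cP \setminus \{p\}$ can exist, so $p \in \ext(\cP)$. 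This shows $\ext(\cP)$ equals the intrinsic extreme-point set of $\co(\cP)$, and symmetrically $\ext(\cP')$ equals that of $\co(\cP')$; since $\co(\cP) = \co(\cP')$, we are done.

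An alternative, more self-contained route avoids invoking intrinsic extreme points: use Lemma~\ref{lemma:pinfandpsupinextP}, which gives $\cP \simeq \ext(\cP)$ and $\cP' \simeq \ext(\cP')$. Combined with $\cP \simeq \cP'$ and transitivity of $\simeq$, this yields $\ext(\cP) \simeq \ext(\cP')$, hence $\co(\ext(\cP)) = \co(\ext(\cP'))$ by Proposition~\ref{prop:credalsets}. It then remains to show that for a set $E$ consisting entirely of extreme points (of its own convex hull), $E$ is recoverable from $\co(E)$ as exactly $\ext(\co(E))$ — i.e., $\co(E_1) = \co(E_2)$ with $E_1, E_2$ both ``all-extreme'' forces $E_1 = E_2$. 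This is the Milman/partial-converse-to-Choquet statement: every extreme point of $\co(E)$ lies in $\overline{E}$, and under the standing closedness assumptions $E$ is already closed, so $\ext(\co(E)) \subseteq E$; and each point of $E$, being extreme in $\co(E)$, lies in $\ext(\co(E))$.

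The main obstacle I anticipate is the bookkeeping around the integral (continuous convex combination) definition of extreme points and the interchange between ``barycenter over $\cP$'' and ``barycenter over $\ext(\cP)$'': one needs a disintegration/Fubini-type argument to collapse a two-stage averaging (a measure on $\co(\cP)$, each of whose points is itself a barycenter of a measure on $\cP$) into a single measure on $\cP$, and to ensure the resulting measure is supported off $\{p\}$. The compactness/total-boundedness hypotheses from Proposition~\ref{prop:whatconditionsonPforgoodbehaviour} (so that Choquet/Milman applies and extreme points behave well) are exactly what make this rigorous, so I would state the lemma under those standing assumptions and cite Bishop–de Leeuw~\citep{bishop1959representations} for the Milman-type inclusion $\ext(\co(E)) \subseteq \overline{E}$.
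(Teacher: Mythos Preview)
Your proposal has a structural problem: both routes you sketch invoke Proposition~\ref{prop:credalsets}, but in the paper's logical order that proposition is proved \emph{using} Lemma~\ref{lemma:equivalentPimpliessameexteremepoints}. Specifically, the ($\Rightarrow$) direction of the proof of Proposition~\ref{prop:credalsets} opens with ``it follows from Lemma~\ref{lemma:equivalentPimpliessameexteremepoints} that $\ext(\cP)=\ext(\cP')$''. So citing Proposition~\ref{prop:credalsets} as an input here is circular, and neither your main argument nor your alternative route stands as written.

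The paper avoids this by working directly from Definition~\ref{def:equivalenceofimpreciseforecasts} together with Lemma~\ref{lemma:pinfandpsupinextP}, never touching Proposition~\ref{prop:credalsets}: for $q\in\ext(\cP)$ and arbitrary $s,\cQ$, it sandwiches $\mathbb{E}_q[s(\cQ,o)]$ between the $\inf$ and $\sup$ over $\ext(\cP)$, transfers these to $\cP$, then to $\cP'$, then to $\ext(\cP')$ via Lemma~\ref{lemma:pinfandpsupinextP} and the assumed equivalence, and from this concludes $q\in\ext(\cP')$ (and symmetrically). If you want to salvage your convex-hull approach, you would need to establish $\cP\simeq\cP'\Rightarrow\co(\cP)=\co(\cP')$ independently of the present lemma---for instance by a separating-hyperplane argument showing that equality of all lower and upper expectations forces the closed convex hulls to coincide---thereby giving an alternative proof of the forward direction of Proposition~\ref{prop:credalsets} first. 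That would be a legitimate reorganisation, but it is not what you have written.
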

\begin{proof}
    By Definition~\ref{def:equivalenceofimpreciseforecasts}, two imprecise beliefs $\cP,\cP'\subseteq\Delta(\cO)$ are equivalent if for all scoring rule $s$ and forecast $\cQ\subseteq\Delta(\cO)$, $\text{im}(\bm{V}^{\cP}(\cQ))=\text{im}(\bm{V}^{\cP'}(Q))$. This means that, 
    \begin{align*}
        \inf_{p\in\cP}\mathbb{E}_{p}[s(\cQ,o)]=\inf_{p'\in\cP'}\mathbb{E}_{p'}[s(\cQ,o)]\quad \text{ and }\quad \sup_{p\in\cP}\mathbb{E}_{p}[s(\cQ,o)]=\sup_{p'\in\cP'}\mathbb{E}_{p'}[s(\cQ,o)].
    \end{align*}
    ($\Rightarrow$) For the first part of the proof, we show that $\ext(\cP)\subseteq \ext(\cP')$. Let $q\in \ext(\cP)$, we know that for all $s,\cQ$,
    \begin{align*}
         \inf_{p\in \ext(\cP)}\mathbb{E}_{p}[s(\cQ,o)] & \leq \mathbb{E}_{q}[s(\cQ,o)]\leq\sup_{p\in \ext(\cP)}\mathbb{E}_{p}[s(\cQ,o)] \\
        \inf_{p\in\cP}\mathbb{E}_{p}[s(\cQ,o)] & \leq \mathbb{E}_{q}[s(\cQ,o)]\leq \sup_{p\in\cP}\mathbb{E}_{p}[s(\cQ,o)] && (\cP\simeq \ext(\cP) \text{ from Lemma}~\ref{lemma:pinfandpsupinextP})\\
        \inf_{p'\in\cP'}\mathbb{E}_{p'}[s(\cQ,o)] & \leq \mathbb{E}_{q}[s(\cQ,o)]\leq \sup_{p'\in\cP'}\mathbb{E}_{p'}[s(\cQ,o)] && (\cP\simeq\cP'\text{ by definition})\\
        \inf_{p'\in \ext(\cP')}\mathbb{E}_{p'}[s(\cQ,o)]&\leq \mathbb{E}_{q}[s(\cQ,o)]\leq \sup_{p'\in \ext(\cP')}\mathbb{E}_{p'}[s(\cQ,o)]. && (\cP'\simeq \ext(\cP') \text{ from Lemma}~\ref{lemma:pinfandpsupinextP})
    \end{align*}
    The last inequalities imply that $q\in \ext(\cP')$. ($\Leftarrow$) Next, we show that $\ext(\cP') \subseteq \ext(\cP)$. Let $q'\in \ext(\cP')$. Then, we know that for all $s,\cQ$,
    \begin{align*}
        \inf_{p'\in \ext(\cP')}\mathbb{E}_{p}[s(\cQ,o)]&\leq \mathbb{E}_{q'}[s(\cQ,o)]\leq \sup_{p'\in \ext(\cP')}\mathbb{E}_{p'}[s(\cQ,o)] \\
        \inf_{p'\in\cP'}\mathbb{E}_{p'}[s(\cQ,o)]&\leq \mathbb{E}_{q'}[s(\cQ,o)]\leq \sup_{p'\in\cP'}\mathbb{E}_{p}[s(\cQ,o)] && (\cP'\simeq \ext(\cP') \text{ from Lemma}~\ref{lemma:pinfandpsupinextP})\\
        \inf_{p\in\cP}\mathbb{E}_{p}[s(\cQ,o)]&\leq \mathbb{E}_{q'}[s(\cQ,o)]\leq \sup_{p\in\cP}\mathbb{E}_{p}[s(\cQ,o)] && (\cP\simeq\cP'\text{ by definition})\\
        \inf_{p\in \ext(\cP)}\mathbb{E}_{p}[s(\cQ,o)]&\leq \mathbb{E}_{q'}[s(\cQ,o)]\leq \sup_{p\in \ext(\cP)}\mathbb{E}_{p}[s(\cQ,o)]. && (\cP\simeq \ext(\cP) \text{ from Lemma}~\ref{lemma:pinfandpsupinextP})
    \end{align*}
    The last inequalities imply that $q' \in \ext(\cP)$. Since both $\ext(\cP)\subseteq \ext(\cP')$ and $\ext(\cP')\subseteq \ext(\cP)$, we can conclude that $\ext(\cP) = \ext(\cP')$.
\end{proof}

\subsection{Preference relation in the subset of a credal set}
The lemma argues that the dominance induced by the preference relation associated with a credal set can only be refined by considering its subsets. Formally, 
\begin{lemma} 
    \label{lemma:additional-subset-q-preference}
    For any pair of imprecise forecasts $\cP, \cQ\subseteq\Delta(\cO)$ such that $co(\cQ)\subset co(\cP)$
    \begin{align*}
      a\succeq_{\cP} a' \implies a \succeq_{\cQ} a' \quad \forall\text{ }a,a'\in\cA
    \end{align*}
    where $\succeq_\cP,\succeq_\cQ$ are the partial preference relations over the space of actions induced by the corresponding expected utility profiles $\{\mathbb{E}_p[u(\cdot,o)]\}_{p\in\cP}$ and $\{\mathbb{E}_q[u(\cdot,o)]\}_{q\in\cQ}$. 
\end{lemma}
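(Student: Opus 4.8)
The plan is to unwind Definition~\ref{def:decision-making-with-imprecise-forecasts} and lean on nothing more than the linearity of expectation in the underlying probability measure, together with the fact (already implicit in Proposition~\ref{prop:credalsets}) that $\succeq$ depends on a credal set only through its convex hull. Fix $a,a'\in\cA$ and suppose $a\succeq_{\cP}a'$, i.e.\ $\EE_p[u(a,o)]\ge\EE_p[u(a',o)]$ for every $p\in\cP$. Introduce the functional $\Phi(p):=\EE_p[u(a,o)-u(a',o)]$ on $\Delta(\cO)$; since $p\mapsto\EE_p[\cdot]$ is linear, $\Phi$ is a linear functional of $p$, and the hypothesis reads $\Phi(p)\ge 0$ for all $p\in\cP$.

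First I would extend this nonnegativity from $\cP$ to $\co(\cP)$. Any $r\in\co(\cP)$ can be written as a mixture $r=\int_{p\in\cP}\bm{w}(p)\,p\,dp$ with $\bm{w}(p)\ge 0$ and $\int\bm{w}(p)\,dp=1$, so by linearity $\Phi(r)=\int_{p\in\cP}\bm{w}(p)\,\Phi(p)\,dp\ge 0$. Hence $\Phi\ge 0$ on all of $\co(\cP)$. By the hypothesis $\co(\cQ)\subset\co(\cP)$, we get $\Phi(r)\ge 0$ for every $r\in\co(\cQ)$, and in particular for every $q\in\cQ\subseteq\co(\cQ)$. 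Unwinding the definition of $\Phi$, this says $\EE_q[u(a,o)]\ge\EE_q[u(a',o)]$ for all $q\in\cQ$, which is precisely $a\succeq_{\cQ}a'$. Since $a,a'$ were arbitrary, the claim follows; note that only $\co(\cQ)\subseteq\co(\cP)$ is used, so the strictness of the inclusion is irrelevant.

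I do not expect a genuine obstacle here: the statement is essentially the monotonicity of ``being dominated over a larger set of priors,'' and the only point that needs a line of care is the passage from $\cP$ to $\co(\cP)$, handled above by linearity of expectation. In particular, no compactness, closedness, or extreme-point machinery (Lemma~\ref{lemma:pinfandpsupinextP}) is needed for this direction, since we only ever push nonnegativity of a linear functional \emph{outward} to convex combinations rather than trying to realize an infimum.
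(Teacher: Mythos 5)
Your proof is correct and is essentially the same argument as the paper's: both push the inequality $\mathbb{E}_p[u(a,o)]\ge\mathbb{E}_p[u(a',o)]$ from $\cP$ to $\co(\cP)$ by linearity of expectation, then restrict to $\co(\cQ)\subseteq\co(\cP)$ and finally to $\cQ$. You spell out the convex-combination step via the functional $\Phi$ where the paper leaves it implicit, but the route is identical.
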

\begin{proof}
    Let us assume an arbitrary $\cQ$ and $\cP$ such that $\cQ\subset \cP$. Now let us consider a pair of inputs $x,x'\in\cX$ such that $x\succeq_{\cP} x'$. This implies that  
    \begin{align*}
\mathbb{E}_p[u(x,o)]&\geq\mathbb{E}_p[u(x',o)]\quad \forall p\in\cP\\
      \implies \quad \mathbb{E}_p[u(x,o)]&\geq \mathbb{E}_p[u(x',o)] \quad \forall p\in co(\cP)\\
      \implies \quad \mathbb{E}_q[u(x,o)]&\geq \mathbb{E}_q[u(x',o)]\quad \forall q\in co(\cQ) &&(co(\cQ)\subset co(\cP))\\
      \implies \quad \mathbb{E}_q[u(x,o)]&\geq \mathbb{E}_q[u(x',o)] \quad \forall q\in \cQ\\
      \implies \quad x &\succeq_{\cQ} x'
    \end{align*}
\end{proof}
\section{Proof of Results in Section \ref{section:setup}}
\subsection{Proof of Proposition ~\ref{prop:equivalenceofpreciseforecasts}}
\begin{proof}
    ($\Leftarrow$) 
    Let us assume that there are two identical distributions $p,q\in\Delta(\cO)$, i.e., $p=q$ that implies $\mathbb{E}_{p}[s(\cQ,o)]=\mathbb{E}_{q}[s(\cQ,o)]$ for all $\cQ\subseteq\Delta(\cO)$ and IP scoring rule $s$. Therefore, $\{p\}\simeq\{q\}$. 

    ($\Rightarrow$) Next, let us assume that $\{p\}\simeq \{q\}$, which means that
    \begin{align*}
         \mathbb{E}_{p}[s(\cQ,o)]=\mathbb{E}_{q}[s(\cQ,o)], \quad \forall s, \forall \cQ \subseteq\Delta(\cO).
    \end{align*}
    Since the above holds for all $s$ and $\cQ$, we choose $s$ to be strictly proper for precise forecasts and $\cQ:=\{p\}$. Hence,
    \begin{align*}
        \mathbb{E}_{p}[s(\{p\},o)]&=\mathbb{E}_{q}[s(\{p\},o)]\\
       \implies \quad p&=q. &&(s\text{ is strictly proper for precise forecasts})
    \end{align*}
    This completes the proof.
\end{proof}

\subsection{Proof of Proposition ~\ref{prop:credalsets}}
\begin{proposition}
   For imprecise beliefs $\cP,\cP'\subseteq \Delta(\cO)$ with non-empty extreme points, $\cP\simeq \cP'$ if and only if $co(\cP)=co(\cP')$.
\end{proposition}
\begin{proof}

($\Leftarrow$) First, we assume that $\cP$ and $\cP'$ induce the same credal set, i.e., 
 \begin{align*}
     \co(\cP) &= \co(\cP') \\
      \ext(\cP) &= \ext(\cP') && \text{(credal sets are convex hulls)} \\
       \ext(\cP)\simeq\cP & \quad\text{ and }\quad \ext(\cP')\simeq\cP'&&\text{(Lemma \ref{lemma:pinfandpsupinextP})}\\
     \implies \quad \cP&\simeq \cP'
 \end{align*}
Hence, $\cP$ and $\cP'$ are equivalent. 

($\Rightarrow$) Next, we assume that $\cP$ and $\cP'$ are equivalent. Then, it follows from Lemma~\ref{lemma:equivalentPimpliessameexteremepoints} that $\ext(\cP)=\ext(\cP')$. 

Let us assume that there exists a $P\in\co(\cP)$. Since credal sets are convex sets, $P$ can be expressed as a convex combination of the extreme points. Therefore, there exists some $w\in\Delta(\ext(\cP))$ such that  
\begin{align*}
    P&= \int_{p\in\ext{\cP}}w(p)dp \underset{(\vardiamondsuit)}{=} \int_{p\in\ext{\cP'}}w(p)dp &&(\vardiamondsuit:\text{Lemma~\ref{lemma:equivalentPimpliessameexteremepoints}})
\end{align*}
Thus $P\in\co(\cP')$ and therefore, $\co(\cP')\subseteq\co(\cP)$. 

Similarly, let us assume that there exists a $P'\in\co(\cP')$. Now $P'$ can also be expressed as a convex combination of the extreme points. Therefore, there exists some $w'\in\Delta(\ext(\cP'))$ such that  
\begin{align*}
    P'&= \int_{p'\in\ext{\cP'}}w'(p')dp' \underset{(\vardiamondsuit)}{=} \int_{p'\in\ext{\cP}}w(p')dp' &&(\vardiamondsuit:\text{Lemma~\ref{lemma:equivalentPimpliessameexteremepoints}})
\end{align*}
Thus $P\in\co(\cP)$ and therefore, $\co(\cP)\subseteq\co(\cP')$. Since $\co(\cP')\subseteq\co(\cP)$ and $\co(\cP)\subseteq\co(\cP')$, therefore $\co(\cP)=\co(\cP')$
\end{proof}
\subsection{Proof of Theorem~\ref{theorem:nosaneproperscoringrule}}

\textbf{Part I:} We first show that for any IP scoring rule $s$, it must give a constant score to all forecasts.
\begin{proof}    
Let us assume there exists a proper scoring rule $s:2^{\Delta(\cO)}\times \cO\rightarrow\mathbb{R}\cup\{-\infty\}$. Then, according to the definition of proper IP scoring rules for an imprecise forecaster with a vacuous belief $\Delta(\cO)$, we must have, 
\begin{equation*}
    \Delta(\cO) \succeq_{\Delta(\cO)} \cQ, \quad \forall \cQ\not\simeq\Delta(\cO).
\end{equation*}
This follows from the fact that for a proper score $s$, $\bm{V}^{\Delta(\cO)}(\Delta(\cO))$ dominates $\bm{V}^{\Delta(\cO)}(\cQ)$. Consequently, it follows from Definition \ref{def:propernessofimprecisescoringrule} that
\begin{equation}\label{eq:vacuous-dominance}
    \mathbb{E}_p[s(\Delta(\cO),o)] \geq \mathbb{E}_p[s(\cQ,o)], \quad \forall \cQ\not\simeq\Delta(\cO),\; \forall p\in\Delta(\cO).
\end{equation}
Let $\tilde{\cQ} := \{\cQ \,|\, \cQ\not\simeq\Delta(\cO)\}$ be the set of all forecasts not equivalent to the forecaster's belief ($\Delta(\cO)$), then we can rewrite \eqref{eq:vacuous-dominance} as 
    \begin{align}
        \label{eq:theorem-1-eq-1}
        \mathbb{E}_p[s(\Delta(\cO),o)]&\geq  \mathbb{E}_p[s(\cQ,o)], \quad \forall \cQ\in\tilde{\cQ},\; \forall p\in\Delta(\cO).
    \end{align}
Also, $\{q\}_{q\in\Delta(\cO)}\subseteq\tilde{\cQ}$ since $q\not\simeq\Delta(\cO)$. Combining this with Equation~\eqref{eq:theorem-1-eq-1} yields
\begin{align}
    \mathbb{E}_p[s(\Delta(\cO),o)] &\geq  \mathbb{E}_p[s(\{q\},o)], \quad \forall q\in\Delta(\cO),\; \forall p\in\Delta(\cO) \nonumber \\
    \label{eq:theorem-1-eq-2}
    \implies \quad \mathbb{E}_p[s(\Delta(\cO),o)]&\geq  \mathbb{E}_p[s(\{p\},o)], \quad \forall p\in\Delta(\cO), && 
\end{align}
where the second inequalities follow by selecting the inequalities such that $q=p$.
Similarly, let us analyse the incentives for all precise forecasters with belief $p\in\Delta(\cO)$ given a proper IP scoring rule $s$. 
Then, for all precise forecasters we must have, 
\begin{align*}
    \{p\}&\succeq_{\{p\}} \cQ, \quad \forall p\in \Delta(\cO),\; \forall \cQ\in\tilde{\cQ} && (\tilde{\cQ}:=2^{\Delta(\cO)}\setminus \{p\})\\
    \implies \quad \{p\}&\succeq_{\{p\}} \Delta(\cO), \quad \forall p \in \Delta(\cO) && (\Delta(\cO)\in \tilde{\cQ})\\
    \implies \quad \mathbb{E}_p[s(\{p\},o)]&\geq  \mathbb{E}_p[s(\Delta(\cO),o)], \quad \forall p\in\Delta(\cO).
\end{align*}    
However, it follows from Equation \eqref{eq:theorem-1-eq-2} that $\mathbb{E}_p[s(\Delta(\cO),o)]\geq \mathbb{E}_p[s(\{p\},o)]$ and $\mathbb{E}_p[s(\{p\},o)]\geq\mathbb{E}_p[s(\Delta(\cO),o)]$ for all $p\in\Delta(\cO)$. This implies that $\mathbb{E}_p[s(\Delta(\cO),o)]=\mathbb{E}_p[s(\{p\},o)]$ for all $p\in\Delta(\cO)$.

Therefore, any IP scoring rule $s$ that satisfies properness sets up incorrect incentives for the forecaster. For example, the expected score for honestly reporting a precise forecast is the same as reporting the vacuous set of all distributions, i.e., 
\begin{equation}
    \label{eq:theorem-1-eq-3}
    \mathbb{E}_p[s(\Delta(\cO),o)]=\mathbb{E}_p[s(\{p\},o)], \quad \forall p\in\Delta(\cO).
\end{equation}
While the above equation is sufficient to discard any proper scoring rule, we show that the only IP scoring rule possible is a constant function. For $s$ to be proper for imprecise forecasts, the following must hold true for all $\cP\subseteq\Delta(\cO)$:
\begin{align}
    \cP&\succeq_\cP \{q\}, \quad \forall q\in \Delta(\cO)\nonumber \\
    \mathbb{E}_p[s(\cP,o)]&\geq\mathbb{E}_p[s(\{q\},o)], \quad \forall q\in \Delta(\cO),\; \forall p\in\cP \nonumber \\
    \mathbb{E}_p[s(\cP,o)]&\geq\mathbb{E}_p[s(\{q\},o)], \quad \forall q\in \cP, \; \forall p\in\cP \nonumber \\
    \label{eq:theorem-1-eq-4}
    \implies \mathbb{E}_p[s(\cP,o)]&\geq\mathbb{E}_p[s(\{p\},o)], \quad \forall p\in\cP.
\end{align}
Similarly, for any $p\in\Delta(\cO)$, the following must hold: 
\begin{align}
    \{p\}&\succeq_p \cP, \quad \forall p\in \Delta(\cO) \nonumber \\
    \label{eq:theorem-1-eq-5}
    \implies \mathbb{E}_p[s(\{p\},o)]&\geq \mathbb{E}_p[s(\cP,o)].
\end{align}
Combining Equations~\ref{eq:theorem-1-eq-3}, ~\ref{eq:theorem-1-eq-4} and ~\ref{eq:theorem-1-eq-5} yields 
\begin{align}
\label{eq:theorem-1-eq-6}
    \mathbb{E}_p[s(\Delta(\cO),o)]&=\mathbb{E}_p[s(\{p\},o)]=\mathbb{E}_p[s(\cP,o)], \quad \forall p\in\Delta(\cO)
\end{align}
Given Equation~\ref{eq:theorem-1-eq-6} is valid for all $p\in\Delta(\cO)$, we consider the a subset of $\Delta(\cO)$. To be precise, the set of all Dirac distributions associated with each outcome, i.e. $p\in\{\delta_{o}\}_{o\in\cO}$
\begin{align*}
    \mathbb{E}_p[s(\Delta(\cO),o)]&=\mathbb{E}_p[s(\{p\},o)]=\mathbb{E}_p[s(\cP,o)], \quad p\in \{\delta_o\}_{o\in \cO}  && (\{\delta_o\}_{o\in \cO}\subseteq \Delta(\cO))\\
    \implies s(\Delta(\cO),o)&=s(\{p\},o)=s(\cP,o), \quad \forall o\in \cO.
\end{align*}
Hence, $s$ needs to be a constant score for it to be a proper IP scoring rule.  
\end{proof}
\textbf{Part II:} There exists no strictly proper IP scoring rule $s$.
\begin{proof}
Assume that there exists a strictly proper IP scoring rule $s$. Consider a precise forecaster with belief $q\in\Delta(\cO)$. Then, we have 
\begin{align}
    \{q\}\succ_{q} \cQ, \quad \forall \cQ\not\simeq{q} \nonumber \\
    \implies \mathbb{E}_{q}[s(\{q\},o)]> \mathbb{E}_{q}[s(\cQ,o)] \nonumber \\
    \label{eq:theorem-1-eq-7}
    \implies \mathbb{E}_{q}[s(\{q\},o)]> \mathbb{E}_{q}[s(\Delta(\cO),o)]. && (\Delta(\cO)\text{ is one possible }\cQ)
\end{align}
Since $s$ is strictly proper, it satisfies Equation~\ref{eq:theorem-1-eq-3}. However, this results in a contradiction to Equation ~\ref{eq:theorem-1-eq-7}. Hence, no $s$ can be strictly proper.  
\end{proof}
\section{Proof of Results in Section \ref{sec:imprecisescoringrules}}

\subsection{Why is Non-dictatorship Desirable?}
\label{appendix:dictatorships}
Let us assume that $\rho$ violates non-dictatorship, then $\rho$ is dictatorial. For clarity, we also define a dictatorship.
\begin{definition}(Dictatorship)
An aggregation rule $\rho$ is a dictatorial if there exists a $P_\rho\in\cP$ (dictator), that depends on $\rho$, such that for any pair of reports $\cQ,\cQ'\subseteq\Delta(\cO)$,
\begin{equation*}
    \cQ\succeq_{P_{\rho}} \cQ' \implies \cQ\succeq_{\rho[\cP]}\cQ'.
\end{equation*}
\end{definition}
A dictatorial $\rho$ not only allows the forecaster to remove indecision in their decision-making problem about which $\cQ$ to report, it also allows the forecaster to precisely resolve their epistemic uncertainty, i.e., by reducing the credal set $\cP$ to only the dictator $P_\rho$. 

Let us denote the set of best reports plausible under aggregation $\rho$ by $\tilde{\cQ}^{\rho}:=\{\cQ \,|\, \cQ\succeq_{\rho} \cQ', \forall \cQ'\subseteq\Delta(\cO)\}$. Since $\succeq_{\rho}$ is complete, if the set of best reports $\tilde{\cQ}$ contains more than one report, then they must be indifferent w.r.t. $\succeq_{\rho[\cP]}$. 
Given $\rho$ is a dictatorship, there exists $P_\rho\in\cP$ such that $\succeq_{P_\rho}$ dictates the preference $\succeq_{\rho[\cP]}$. That is, the set of best reports under $P_\rho$ must be exactly the same as that under $\rho$. Therefore,  
\begin{equation*}
    \tilde{\cQ}^{P}=\tilde{\cQ}^{\rho}.
\end{equation*}
This implies that the expected scores of ${P_\rho}$ and $\cP$ with any dictatorial $\rho$ is the same, i.e.,  
\begin{equation*}
    V_{\rho}^\cP(\{P_\rho\}) = V_{\rho}^\cP(\cP).
\end{equation*}

\subsection{Proof of Proposition~\ref{prop:tailoredscoringrule}}
\begin{proof}
    We prove this result by contradiction. Let us assume that there exists a tailored scoring rule $s_\rho$ that is not proper and analyse this scoring rule for an arbitrary forecaster with an imprecise belief $\cP\subseteq\Delta(\cO)$. Since $s_\rho$ is not proper, it implies that there exists $Q\subseteq\Delta(\cO)$ where  $\cQ\not\simeq\cP$ such that 
    \begin{equation}
    \label{eq:proposition-4-point-9-eq-1}
        V_{\rho}^{\cP}(\cQ)> V_{\rho}^{\cP}(\cP).
    \end{equation}
    In other words, the forecaster strictly prefers the forecast $\cQ$ over their belief $\cP$. However, let's analyse the scenario from DM's perspective when they obtain forecast $\cP$, the optimal action according to the forecast $\cP$ is
    \begin{align}
    \label{eq:proposition-4-point-9-eq-2}
    a^*_{\cP,\rho} = \arg\max_{a\in\cA} \; \rho\left[\{\mathbb{E}_{p}[u(a,o)]\}_{p\in\cP}\right].
    \end{align}
    Since $a^*_{\cP,\rho}$ is the maximizer of DM's aggregated utility, this means that for all $a\in\cA$,
    \begin{align}
    \label{eq:proposition-4-point-9-eq-3}
        \rho\left[\{\mathbb{E}_{p}[u(a^*_{\cP,\rho},o)]\}_{p\in\cP}\right] \geq \rho\left[\{\mathbb{E}_{p}[u(a,o)]\}_{p\in\cP}\right].
    \end{align}
    However, we know that from \Cref{eq:proposition-4-point-9-eq-1}
    \begin{align*}
        V_{\rho}^{\cP}(\cQ)&> V_{\rho}^{\cP}(\cP)\\
        \rho[\{\mathbb{E}_{p}[s_{\rho}(\cQ,o)]\}_{p\in\cP}]&>\rho[\{\mathbb{E}_{p}[s_\rho(\cP,o)]\}_{p\in\cP}]\\
        \rho[\{\mathbb{E}_{p}[u(a^*_{\cQ,\rho},o)]\}_{p\in\cP}]&>\rho[\{\mathbb{E}_{p}[u(a^*_{\cP,\rho},o)]\}_{p\in\cP}].
    \end{align*}
    This results in a contradiction to \Cref{eq:proposition-4-point-9-eq-3}. Therefore, $s_\rho$ must be proper. Since this holds for any choice of $\rho$, we can conclude that $s_\rho$ must be proper for any aggregation rule $\rho$.
\end{proof}

\subsection{Proof of Lemma \ref{lemma:strictness-for-precise-distributions}}

\textbf{Part I: Strict properness of IP scoring rule for precise forecasts}

\begin{proof} ($\Rightarrow$) From \Cref{theorem:gneiting}, a regular precise scoring rule $s$ is (strictly) proper if and only if there exists a corresponding (strictly) convex function $G$ on $\Delta(\cO)$ such that 
    \begin{align*}
        s(p,o)= G(p) - \sum_{o\in\cO} G'(p)(o) p(o) + G'(p)(o).
    \end{align*}
    Moreover, it follows from \Cref{remark:gneiting} that the $G(p)=\mathbb{E}_{p}[s(p,o)]$. Hence, for a tailored scoring rule $s_\rho$ on precise distribution $p\in\Delta(\cO)$ to be strictly proper, we must have 
    \begin{align*}
        G(p)&=\mathbb{E}_{p}[s_\rho(\{p\},o)]\\
            &= k\mathbb{E}_{p}[u(a^*_p,o)]+c && (\text{Tailored scoring rule; \Cref{def:tailoredscoringrule}})\\
            &= \max_{a\in\cA}\; k\mathbb{E}_{p}[u(a,o)]+c .
    \end{align*}
    Next, for $G(p)$ to be strictly convex in $p$, we must have that for all $p,q\in\Delta(\cO)$,
    \begin{align}
    \label{eq:lemma-equation-2}
         G(q) &> G(p)+\sum_{o\in\cO} G'(p)(o)[q(o)-p(o)] 
     \end{align}
    Where $G'(p)(o)$ is the $o^{\text{th}}$ component of the gradient $G'(p)$ at $p$. Let us consider the right-hand side of \Cref{eq:lemma-equation-2}.
    \begin{align}
          G(p)+\sum_{o\in\cO} G'(p)(o)[q(o)-p(o)]  &= k\mathbb{E}_{p}[u(a^*_p,o)]+ c + \sum_{o\in\cO} ku(a^*_p,o) [q(o)-p(o)]\\
             &= k\mathbb{E}_{p}[u(a^*_p,o)]+ k\mathbb{E}_{q}[u(a^*_p,o)]-k\mathbb{E}_{p}[u(a^*_p,o)]+c\\
        \label{eq:lemma-equation-3}
             &= k\mathbb{E}_{q}[u(a^*_p,o)]+c .
    \end{align}
    Since $G(q):= k\mathbb{E}_{q}[u(a^*_q,o)]+c$, for $G$ to be strictly convex, we use \Cref{eq:lemma-equation-3} to rewrite \Cref{eq:lemma-equation-2} as follows
    \begin{align*}
          G(q) &> k\mathbb{E}_{q}[u(a^*_p,o)]+c, \quad \forall p,q\in\Delta(\cO), \\
        \implies \quad k\mathbb{E}_{q}[u(a^*_q,o)]+c &> k\mathbb{E}_{q}[u(a^*_p,o)]+c, \quad \forall p,q\in\Delta(\cO) .
    \end{align*}
    Hence, $a^*_q$ must be a unique maximizer.

    ($\Leftarrow$) 
    
    We assume that $a^*_{p} := \arg\max_{a\in\cA}\mathbb{E}_{p}[u(a,o)]$ is the unique maximizer for all $p\in\Delta(\cO)$. Then, for all $p,q\in\Delta(\cO)$ and some arbitrary $\lambda\in[0,1]$,
    \begin{align*}
        G(\lambda p+(1-\lambda)q)&= \mathbb{E}_{\lambda p+(1-\lambda)q}[s_\rho(\{\lambda p+(1-\lambda)q\},o)]\\
                                 &= \lambda \mathbb{E}_p[s_\rho(\{\lambda p+(1-\lambda)q\},o)]+(1-\lambda)\mathbb{E}_q[s_\rho(\{\lambda p+(1-\lambda)q\},o)]\\
                                 &= \lambda k\mathbb{E}_p[u(a^*_{\lambda p+(1-\lambda)q},o)]+\lambda c+(1-\lambda)k\mathbb{E}_q[u(a^*_{\lambda p+(1-\lambda)q},o)]+(1-\lambda)c \\
                                 &< \lambda k\mathbb{E}_p[u(a^*_{p},o)]+\lambda c+(1-\lambda)k\mathbb{E}_q[u(a^*_{q},o)]+(1-\lambda)c \tag*{(\mbox{$a^*_p$} and \mbox{$a^*_q$} are unique)}\\
                                 &=\lambda G(p)+(1-\lambda)G(q).
    \end{align*}
    Hence, $G$ is strictly convex.
\end{proof}
\textbf{Part II: Impossibility of strictly proper scoring rules with Pareto efficient $\rho$}
\begin{proof}
Suppose that there exists the aggregation rule $\rho$ such that the tailored scoring rule $s_\rho$ is strictly proper for both precise and imprecise forecasts. This means that for all $\cP\subseteq\Delta(\cO)$, and for all $\cQ\not\simeq\cP$,
\begin{align*}
    V_\rho^\cP(\cP)&>V_\rho^\cP(\cQ)\\
    \implies \quad \rho(\{\mathbb{E}_p[s_\rho(\cP,o)]\}_{p\in\cP})&> \rho(\{\mathbb{E}_p[s_\rho(\cQ,o)]\}_{p\in\cP})\\
    \implies \quad \rho(\{\mathbb{E}_p[u(a^*_{\rho,\cP},o)]\}_{p\in\cP})&> \rho(\{\mathbb{E}_p[u(a^*_{\rho,\cQ},o)]\}_{p\in\cP}). & (s_\rho \text{ is tailored scoring rule})
\end{align*}
The aggregation rule $\rho$ maps the set of preferences $\succeq_{\cP}:=\{\succeq_p\}_{p\in\cP}$ into a complete preference relation $\succeq_{\rho(\cP)}$ which follows the aggregated utility $\rho(\{\mathbb{E}_p[u(\cdot,o)]\}_{p\in\cP})$.

Since $\rho$ is Pareto efficient, for all $a,a'\in\cA$, $a\succeq_{\cP} a'$ implies $a\succeq_{\rho[\cP]} a'$. Only for actions that are incomparable to one another, i.e., $a\not\succeq_{\cP}a'$ and $a'\not\succeq_{\cP}a$, $\rho$ decides to remove indecision by completing the preference as $a\succeq_{\rho[\cP]}a'$ or $a'\succeq_{\rho[\cP]}a$. 

Without loss of generality, let us assume that $\rho$ chooses to rank $a\succeq_{\rho[\cP]}a'$ for two incomparable $a,a'\in\cA$ with respect to original credal set $\cP$.
However, based on \Cref{lemma:additional-subset-q-preference}, we can construct a $Q\subseteq\Delta(\cO)$ such that $\co(\cQ)\subset \co(\cP)$ and $a^*_{\rho,\cP}=a^*_{\rho,\cQ}$. This provides a counterexample to strictness of $s_\rho$ for all Pareto efficient $\rho$.

We now explain the counterexample in detail. We construct $\cQ$ based on its partial preference relation $\succeq_Q$. The preference relation $\succeq_Q$ must be well defined for any two pair of actions $a,a'\in\cA$. To this end we use the preference relation $\succeq_{\cP}$ to define all possible scenarios for a pair of actions $a,a'\in\cA$. Either $a,a'\in\cA$ are comparable with respect to $\succeq_{\cP}$ (\textbf{Case I}) or incomparable (\textbf{Case II}). The construction of $\succeq_{\cQ}$ is defined below
\usetikzlibrary{shapes.geometric, arrows}
\begin{figure}[h]
\centering
\begin{tikzpicture}[node distance=2cm, thick, >=stealth]
    
    \node (case1) [] {\textbf{Case I}};
    \node (process1) [below=0.5cm of case1] {$a\succeq_{\cP}a'$ implies $a\succeq_{\cQ}a'$};
    \node (end1) [below=0.1cm of process1] {(\Cref{lemma:additional-subset-q-preference})};
    \node (case2) [right of=case1, xshift=5cm] {\textbf{Case II}};
    \node (decision) [below=0.5cm of case2] {$(a\not\succeq_{\cP}a')\land(a'\not\succeq_{\cP}a)$};
    \node (branch1) [below left of=decision, xshift=-1cm] {Case II.1};
    \node (branchend1) [below=0.05cm of branch1] {$a\succeq_{\cQ} a':=a\succeq_{\rho[\cP]}a'$};
    \node (branch2) [below right of=decision, xshift=1cm] {Case II.2};
    \node (branchend2) [below=0.05cm of branch2] {$(a\not\succeq_{\cQ}a')\land(a'\not\succeq_{\cQ}a)$};
    \draw[->] (decision) --  (branch1);
    \draw[->] (decision) --  (branch2);
\end{tikzpicture}
\caption{In \textbf{Case I} when actions are comparable in $\succeq_{\cP}$ the \Cref{lemma:additional-subset-q-preference} dictates their order to be the same for partial preference induced by $\succeq_{\cQ}$. However, in \textbf{Case II} when the actions are incomparable w.r.t $\succeq_{\cP}$ either their order in $\succeq_{\cQ}$ must be set to aggregated order of $\cP$ i.e. $\succeq_{\rho[\cP]}$ or they are left untouched, i.e. incomparable w.r.t $\succeq_{\cQ}$}
\end{figure}
Now we are ready to reason what happens when we aggregate the partial preference $\succeq_{\cQ}$ with $\rho$. We will reason for all the cases we defined above. 
\begin{itemize}[label={}]
    \item \textbf{Case I:} For all pairs of $ a,a' \in\cA$ that are comparable w.r.t. $\succeq_{\cP}$  (Assume w.l.o.g $a\succeq_\cP a'$).
\begin{align*}
    a\succeq_\cP a' \underset{(\clubsuit)}{\implies}   a\succeq_{\rho[\cP]} a'\quad \text{ and } \quad  a\succeq_\cP a' \underset{(\vardiamondsuit)}{\implies}   a\succeq_\cQ a' \underset{(\clubsuit)}{\implies}   a\succeq_{\rho[\cQ]} a'. \tag*{(\mbox{$\clubsuit$}:\mbox{$\rho$} is PE,\mbox{$\vardiamondsuit$}: \Cref{lemma:additional-subset-q-preference})}
\end{align*}
Therefore, whenever the pair of actions $a,a' \in\cA$ are comparable w.r.t. $\succeq_{\cP}$, the aggregated preference relation is the same, i.e., $\{\succeq_{\rho[\cP]}\}\equiv\{\succeq_{\rho[\cQ]}\}$.
    \item \textbf{Case II:} Consider $a,a' \in\cA$ that are incomparable w.r.t. $\succeq_{\cP}$. (Assume w.l.o.g that $\rho$ resolves this as $a\succeq_{\rho[\cP]}a'$)
\begin{itemize}[label={}]
    \item \textit{Case II.1}: The pair of $ a,a' \in\cA$ is also comparable w.r.t. $\succeq_{\cQ}$ (Assume w.l.o.g $a\succeq_{\cQ} a'$)
 \begin{align*}
    a\succeq_\cQ a' \underset{(\clubsuit)}{\implies}   a\succeq_{\rho[\cQ]} a'\quad \text{ and } \quad  a\succeq_\cQ a'\underset{(\vardiamondsuit)}{\implies} a\succeq_{\rho[\cP]} a'  \tag*{(\mbox{$\clubsuit$}:\mbox{$\rho$} is PE, \mbox{$\vardiamondsuit$}: by construction)}
\end{align*}
\item \textit{Case II.2}: The pair $ a,a' \in\cA$ is incomparable w.r.t. $\succeq_{\cQ}$.

Since $\rho$ is a function, it will resolve indecision for two inputs in the same way, given that $|\cA|$ is fixed across both these resolutions:  
\begin{align*}
    \Bigg((a\not\succeq_{\cP}a' \land a'\not\succeq_{\cP}a)\implies (a\succeq_{\rho[\cP]} a') \Bigg) \land (a\not\succeq_{\cQ}a' \land a'\not\succeq_{\cQ}a)\implies a\succeq_{\rho[\cQ]} a'.
\end{align*}
\end{itemize}
Therefore, similar to Case 1, whenever the pair of actions $a,a' \in\cA$ are incomparable w.r.t. $\succeq_{\cP}$, the aggregated preference is the same, i.e., $\{\succeq_{\rho[\cP]}\}\equiv\{\succeq_{\rho[\cQ]}\}$. Hence, $a^*_{\rho,\cP}=a^*_{\rho,\cQ}$.
\end{itemize}
This makes $s_\rho$ not strictly proper.
\end{proof}

\section{Proof for Results in Section \ref{sec:rand-tailored-rules}}

\subsection{Proof of Lemma ~\ref{lemma:unqiue-representation-of-credal-set}}

\begin{proof}
     We prove this by contradiction, let us assume that, $\cA_{\ext}$ is not a sufficient way to represent credal sets in the actions space. This implies that there exists a pair of credal sets $\cQ,\cQ'\subseteq\cO$ such that $\cQ\not\simeq\cQ'$ and $\cA^{\ext}_{\cQ'}=\cA^{\ext}_\cQ$. Since $\cQ\not\simeq\cQ'$ it implies either of the two cases 
    \begin{itemize}[]
        \item \textbf{Case 1: } There exists a $ q'\in \ext(\cQ')$ such that $q'\not\in \ext(\cQ)$. This implies that 
     \begin{align*}
         \exists \text{ }a^*_{q'}\in \cA^{\ext}_{\cQ'}\quad\text{and}\quad \exists\text{ } a^*_{q'}\in \cA^{\ext}_{\cQ} && (a^*_{q'}\text{ is unique for all } q'\in\Delta(\cO))
     \end{align*}
     This results contradicts $\cA^{\ext}_{\cQ'}=\cA^{\ext}_\cQ$.
     \item \textbf{Case 2: } There exists a $q \in \ext(\cQ)$ such that $q\not\in \ext(\cQ')$. We follow the same reasoning as Case 1, i.e., 
     \begin{align*}
         \exists\text{ } a^*_{q}\in \cA^{\ext}_{\cQ}\quad\text{and}\quad \exists\text{ } a^*_{q}\in \cA^{\ext}_{\cQ'} && (a^*_{q}\text{ is unique for all } q\in\Delta(\cO))
     \end{align*}
     resulting in a contradiction with $\cA^{\ext}_{\cQ'}=\cA^{\ext}_\cQ$.
    \end{itemize}
    Hence $\cA^{\ext}$ is a unique representation for all credal sets. 
\end{proof}

\subsection{Proof of Theorem ~\ref{theorem:strictly-proper-imprecise-scoring-rule}}

\begin{proof}
We know that for $s_{\theta}$ to be strictly proper, the following must hold for all beliefs $\cP\subseteq\Delta(\cO)$ 
\begin{align*}
    V^{\cP}_{\theta}(\cP)=V^{\cP}_{\theta}(\cQ) \quad \text{iif} \quad \cP\simeq \cQ 
\end{align*}

$(\Rightarrow)$ Given, $\theta\in\Delta(\bm{\rho})$ has full support and $V^{\cP}_{\theta}(\cP)=V^{\cP}_{\theta}(\cQ)$. We know that $\Delta(\rho)$ is positive everywhere and $\rho(\{\mathbb{E}_p[s_{\rho}(\cP,o)]\}_{p\in\cP})- \rho(\{\mathbb{E}_p[s_{\rho}(\cQ,o)]\}\geq 0$ for all $\rho$ as $s_\rho$ is proper for all aggregations $\rho$ based on \Cref{prop:tailoredscoringrule}. It implies that,
\begin{align*}
       \rho(\{\mathbb{E}_p[s_{\rho}(\cP,o)]\}_{p\in\cP}) &= \rho(\{\mathbb{E}_p[s_{\rho}(\cQ,o)]\}_{p\in\cP}) \quad \forall \rho\in \bm{\rho}\quad \forall \cP \subseteq\Delta(\cO)\\
      \bm{\lambda}^{\top}\{\mathbb{E}_p[u(a^*_{\bm{\lambda}^{\top}\cP},o)]\}&=\bm{\lambda}^{\top}\{\mathbb{E}_p[u(a^*_{\bm{\lambda}^{\top}\cQ},o)]\}\quad \forall \bm{\lambda} \in \Delta^{|ext(\cP)|} \quad \forall \cP\subseteq\Delta(\cO) \tag*{(\mbox{$\bm{\rho}$}: fixed linear aggregation)}\\
      \bm{\lambda}^{\top}\{\mathbb{E}_p[u(a^*_{\bm{\lambda}^{\top}\cP},o)]\} &=\bm{\lambda}^{\top}\{\mathbb{E}_p[u(a^*_{\bm{\lambda}^{\top}\cQ},o)]\} \quad \forall \bm{\lambda} \in\{\delta_i\}_{i\in |ext(\cP)|}\quad \forall \cP\subseteq\Delta(\cO)\tag*{\mbox{$\{\delta_i\}_{i\in |ext(\cP)|} \subset \Delta^{|ext(\cP)|}$})}\\
      \mathbb{E}_p[u(a^*_{p},o)]&=\mathbb{E}_p[u(a^*_{q},o)]\quad \forall p\in \cP \quad \forall \cP\subseteq\Delta(\cO) && \tag*{(\mbox{$q:= \delta_i^T\cQ$})}\\
       a^*_{p} &= a^*_{q} \quad \forall p\in \cP \quad \forall \cP\subseteq\Delta(\cO)\tag*{(Using \Cref{lemma:strictness-for-precise-distributions} as \mbox{$s_\theta$} is strictly proper for precise distributions)}\\
    \implies \cA^{ext}_\cP &= \cA^{ext}_\cQ  \tag*{(By Definition of  \mbox{$\cA^{ext}$})}\\
    \implies \cP &\simeq \cQ \tag*{(\Cref{lemma:unqiue-representation-of-credal-set})}
\end{align*}

$(\Leftarrow)$ Given that $\cP\simeq \cQ$ we show that $V_{\theta}^{\cP}(\cP)=V_{\theta}^{\cP}(\cQ)$. This is trivial since two equivalent forecasts produce the same underlying partial order on the actions $\cA$. As aggregation functions make this partial order complete, by the property of being a function, they will result in the same complete order for the same partial order. Therefore, given $\cP\simeq\cQ$ implies that 
\begin{align*}
      V^{\cP}_\rho(\cP)&=V^{\cP}_\rho(\cQ) \quad \forall \rho\in\bm{\rho}\\
       \mathbb{E}_{\theta} [V^{\cP}_\rho(\cP)] &= \mathbb{E}_{\theta} [V^{\cP}_\rho(\cQ)] \quad \forall \theta \in \Delta(\bm{\rho})\\
       V^{\cP}_{\theta}(\cP)&=V^{\cP}_{\theta}(\cQ) \quad \forall \theta \in \Delta(\bm{\rho})
\end{align*}
Therefore, the imprecise forecaster is truthful in the epistemic sense w.r.t the strictly proper IP scoring rule $s_{\theta}$.  
\end{proof}
\section{Simulations}
\label{appendix:simulation}
To test the sanity of our proposed scoring rule, we simulate a scenario where an imprecise forecaster predicts a binary outcome (e.g., chance of rain tomorrow). We assume the forecaster has an imprecise forecast $[0.4,0.6]$ and uses an imprecise scoring rule $s_\rho$ where $\rho$ is a dictatorship or some other aggregation like min-max. We compare this to our randomized imprecise scoring rule $s_\theta$. Given the binary outcome, the forecaster reports an interval $\cQ:=[q_1,q_2]$ where $q_1$ denotes the lower probability and $q_2$ the upper probability respectively.  
\begin{figure}[t]
    \centering
    \begin{subfigure}{0.32\linewidth}
        \centering
        \includegraphics[width=\linewidth]{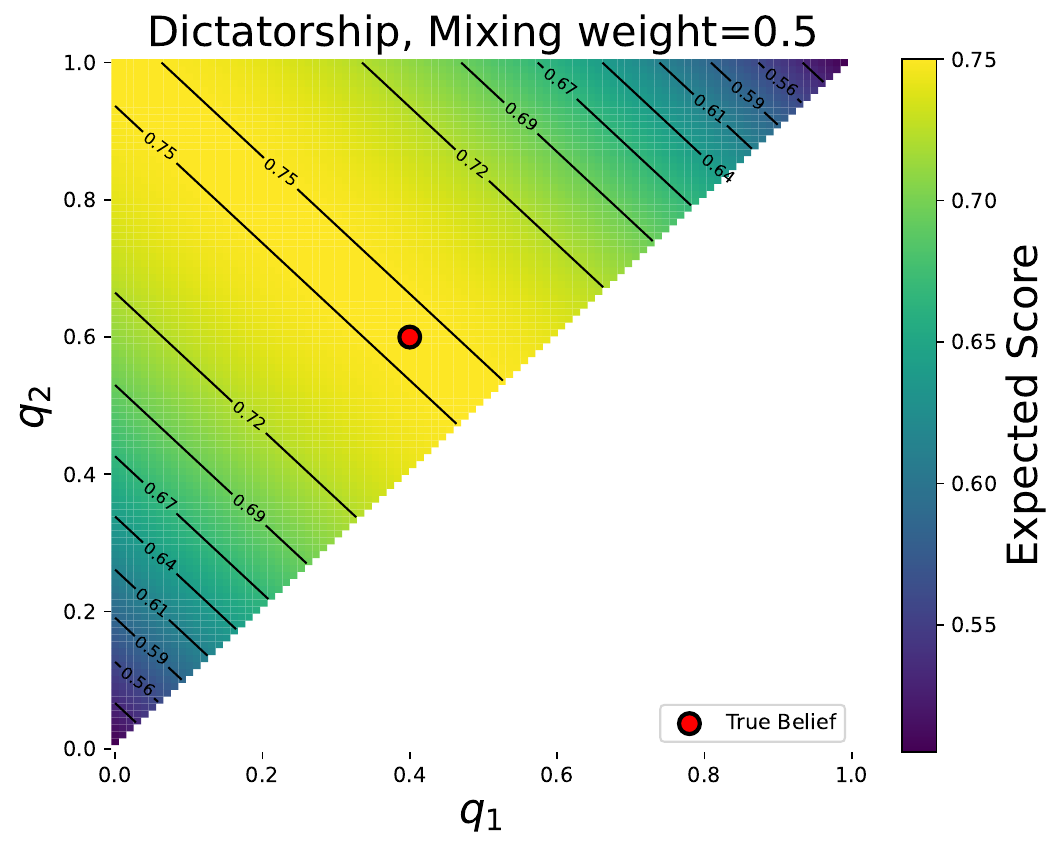}
        \caption{}
    \end{subfigure}
    \begin{subfigure}{0.32\linewidth}
        \centering
        \includegraphics[width=\linewidth]{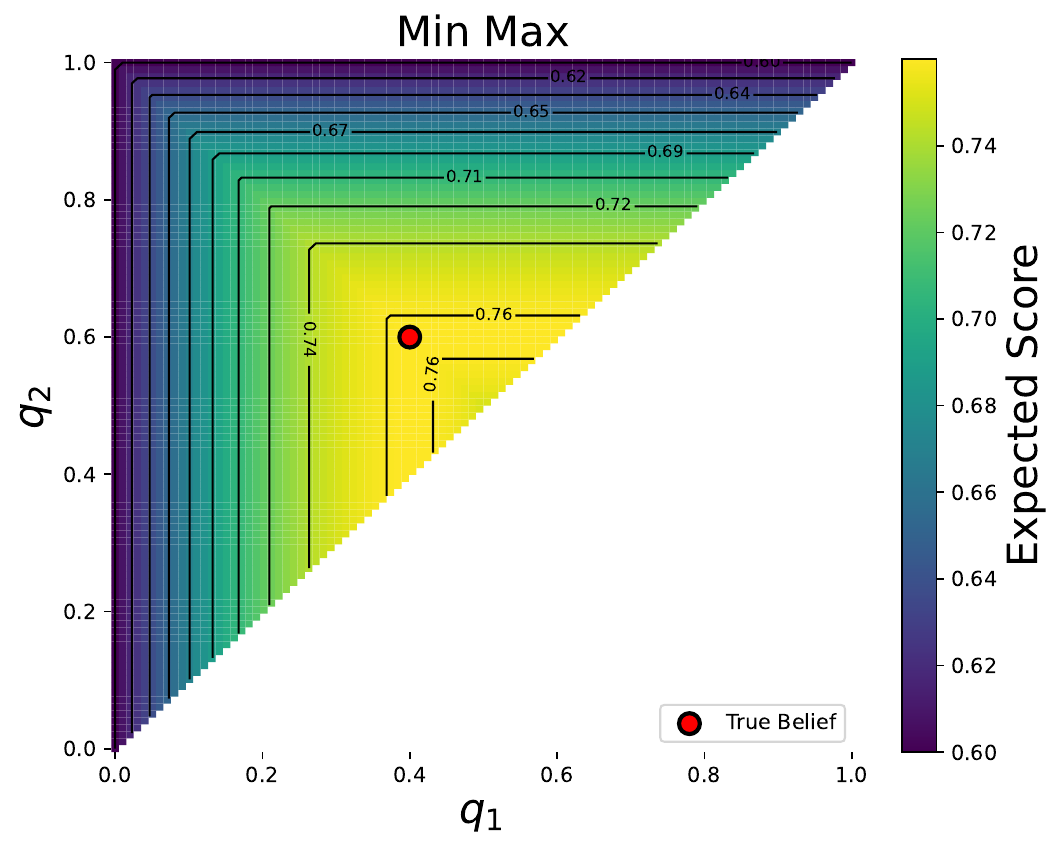}
        \caption{}
    \end{subfigure}
    \begin{subfigure}{0.32\linewidth}
        \centering
        \includegraphics[width=\linewidth]{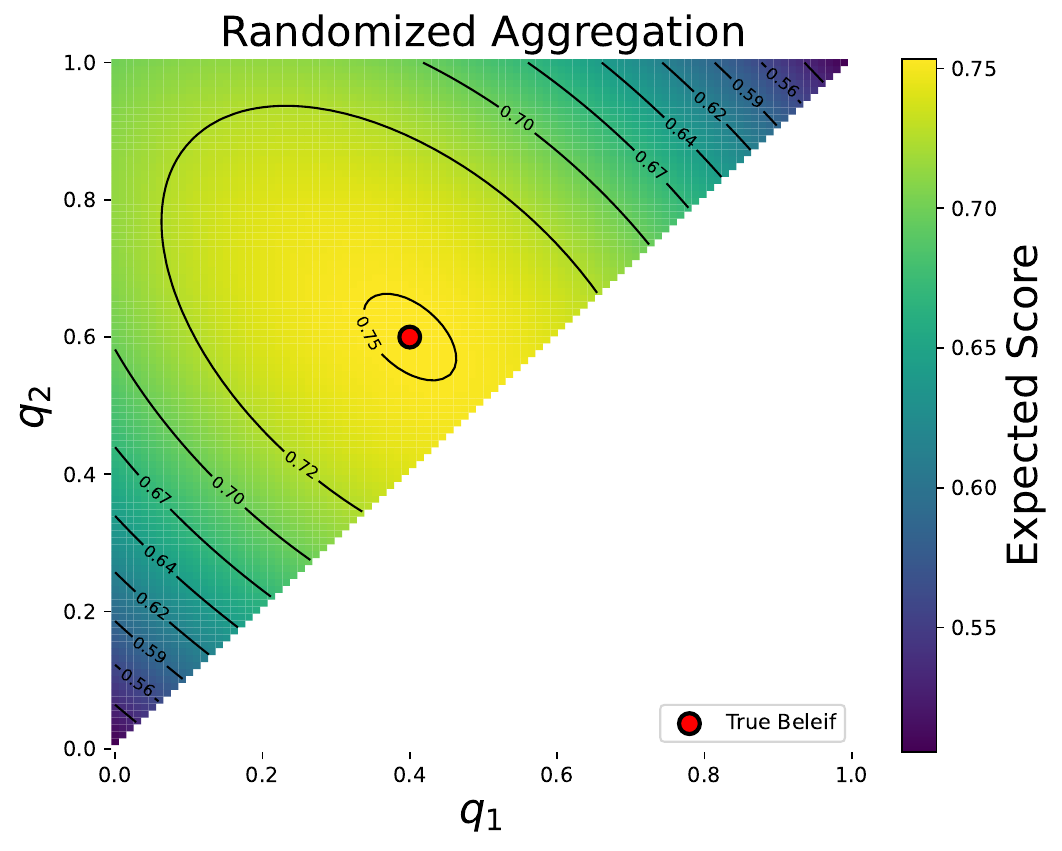}
        \caption{}
    \end{subfigure}
    \caption{(Left-to-right) In the first figure we simulate the scoring rule where $\rho$ is a dictatorship with a fixed mixing weight of 0.5, in the middle figure we simulate the scoring rule with min-max $\rho$ (pessimistic decision maker) and in the last figure we simulate the scoring rule where the aggregation is a randomized dictatorship and the forecaster obtains a distribution $\theta=\mathcal{U}[0,1]$ over $\rho$. The lower half of the figure is not plotted since that corresponds to region $q_1>q_2$, i.e. lower probability being greater than upper probability}
    \label{fig:simulations}
\end{figure}
\Cref{fig:simulations} highlights that the randomized scoring rule $s_\theta$ is strictly proper for imprecise forecasts as it has the highest expected score for the forecaster only when the forecaster reports his true belief. While in other cases of using a deterministic imprecise scoring rule $s_\rho$, if DM provides a $\rho$ such that it is a dictatorship, such as in the case of ~\Cref{fig:simulations}(a), the scoring rule is proper; however, the forecaster can lie by reporting the dictator. This can be inferred from the contour that the point $[0.5,0.5]$, which corresponds to the precise forecast $0.5$, also has the highest expected score. With $\rho$ being a min-max rule, the scoring rule $s_\rho$  is proper but not strictly as other imprecise forecasts allow the forecaster to obtain the same expected score. For our implementation we consider $\cA=[0,1]$ and $u(a,o):=(o-a)^2$ to satisfy ~\Cref{lemma:strictness-for-precise-distributions}. 

\end{bibunit}

\end{document}